\documentclass{article}
\usepackage{microtype}
\usepackage{graphicx}
\usepackage{subcaption}
\usepackage{booktabs}
\usepackage{float}
\usepackage{tabularx}
\usepackage{hyperref}
\usepackage{multirow} 

\usepackage{enumitem}


\usepackage[preprint]{icml2026}
\usepackage{colortbl}

\usepackage{amsmath}
\usepackage{amssymb}
\usepackage{mathtools}
\usepackage{amsthm}

\usepackage[capitalize,noabbrev]{cleveref}
\theoremstyle{plain}
\newtheorem{theorem}{Theorem}[section]
\newtheorem{proposition}[theorem]{Proposition}

\theoremstyle{definition}

\theoremstyle{remark}

\usepackage[textsize=tiny]{todonotes}
\icmltitlerunning{Anchored Policy Optimization: Mitigating Exploration Collapse Via Support-Constrained Rectification}

\begin{document}

\twocolumn[
\icmltitle{Anchored Policy Optimization: Mitigating \
Exploration Collapse via Support-Constrained Rectification}

\icmlsetsymbol{equal}{*}
\icmlsetsymbol{corresponding}{$\dagger$}

\begin{icmlauthorlist}
\icmlauthor{Tianyi Wang}{bupt,equal}
\icmlauthor{Long Li}{inflightech,equal}
\icmlauthor{Hongcan Guo}{bupt}
\icmlauthor{Yibiao Chen}{bupt}
\icmlauthor{Yixia Li}{sustech}\\
\icmlauthor{Yong Wang}{alibaba}
\icmlauthor{Yun Chen}{shufe}
\icmlauthor{Guanhua Chen}{sustech,corresponding}
\end{icmlauthorlist}

\icmlaffiliation{bupt}{Beijing University of Posts and Telecommunications}
\icmlaffiliation{inflightech}{INFLY TECH}
\icmlaffiliation{sustech}{Southern University of Science and Technology}
\icmlaffiliation{alibaba}{Alibaba Group}
\icmlaffiliation{shufe}{Shanghai University of Finance and Economics}

\icmlcorrespondingauthor{Guanhua Chen}{chengh3@sustech.edu.cn}
\icmlkeywords{Machine Learning, ICML}

\vskip 0.3in
]

\printAffiliationsAndNotice{\icmlEqualContribution}

\begin{abstract}
Reinforcement Learning with Verifiable Rewards (RLVR) is increasingly viewed as a tree pruning mechanism. However, we identify a systemic pathology termed Recursive Space Contraction (RSC), an irreversible collapse driven by the combined dynamics of positive sharpening and negative squeezing, where the sampling probability of valid alternatives vanishes. While Kullback-Leibler (KL) regularization aims to mitigate this, it imposes a rigid Shape Matching constraint that forces the policy to mimic the reference model's full density, creating a gradient conflict with the sharpening required for correctness. We propose Anchored Policy Optimization (APO), shifting the paradigm from global Shape Matching to Support Coverage. By defining a Safe Manifold based on the reference model's high-confidence support, APO permits aggressive sharpening for efficiency while selectively invoking a restorative force during error correction to prevent collapse. We theoretically derive that APO serves as a gradient-aligned mechanism to maximize support coverage, enabling an Elastic Recovery that re-inflates valid branches. Empirical evaluations on mathematical benchmarks demonstrate that APO breaks the accuracy-diversity trade-off, significantly improving Pass@1 while restoring the Pass@K diversity typically lost by standard policy gradient methods.
\end{abstract}
\section{Introduction}

Reinforcement Learning with Verifiable Rewards (RLVR) has established itself as a standard paradigm for enhancing the reasoning capabilities of Large Language Models (LLMs) \citep{Guo_2025,wen2025reinforcementlearningverifiablerewards,su2025crossingrewardbridgeexpanding}. Recent theoretical insights, notably by , suggest that RLVR functions primarily as a ``Tree Pruning''\citep{yue2025doesreinforcementlearningreally,wang2025schedulingllmreinforcementlearning} mechanism rather than synthesizing knowledge \textit{ex nihilo}. In this view, optimization identifies and amplifies correct reasoning paths latent in the pre-trained graph while suppressing incorrect branches. Under this hypothesis, the Reference Model serves as the primary source of potential validity.

Guided by the ``Tree Pruning'' perspective, we identify a systemic pathology termed \textbf{Recursive Space Contraction (RSC)} (Section~\ref{sec:recursive_dynamics}). We find that pruning is often irreversible due to a dual dynamic: positive updates passively suppress alternative branches via sharpening, while negative updates blindly redistribute mass back to dominant tokens—a ``rich-get-richer'' effect \citep{ren2025learning} that fails to recover suppressed valid paths. Consequently, the policy locks into narrow, over-confident tracks, losing the probability mass needed for exploration. As shown in Figure~\ref{fig:method_overview}(a), RSC creates a self-reinforcing loop. While methods like Negative Sample Reinforcement (NSR) \citep{zhu2025surprisingeffectivenessnegativereinforcement} aim to maintain diversity, they merely modulate update intensity without addressing the blind nature of probability redistribution inherent in negative feedback. Without a constructive recovery mechanism, the loss of the reference model's valid support becomes permanent \citep{brekelmans2022your,Shumailov2024,huang2025lowprobabilitytokenssustainexploration}.

The challenge then lies in how to regulate this pruning process to allow for both efficient sharpening and robust recovery. The standard approach employs Kullback-Leibler (KL) regularization\citep{ouyang2022traininglanguagemodelsfollow,peter2010relative}, but we argue this represents a strictly \textit{passive utilization} of the reference model, imposing a rigid Shape Matching constraint (Section~\ref{sec:shape_vs_support}) \citep{minka2005divergence}. This forces the policy model to mimic the reference model's full distribution, including its noise, creating a fundamental Gradient Conflict. As illustrated in Figure~\ref{fig:gradient_alignment}, the KL penalty generates a counter-gradient that often competes with the reward signal; this conflict can drive the resultant update direction to diverge from the optimization Trust Region\citep{schulman2017trustregionpolicyoptimization}, leading to instability or suboptimal convergence. As derived in Section~\ref{sec:method}, to improve efficiency (Pass@1), the policy must sharpen around correct paths, yet the KL constraint compels the policy to maintain a broader uncertainty profile. This conflict is particularly detrimental: it hinders necessary sharpening during correct reasoning while failing to provide a proactive direction for manifold restoration during error correction.

To resolve the RSC problem and the resulting Gradient Conflict, we introduce \textbf{Anchored Policy Optimization (APO)}. Shifting the paradigm from Shape Matching to Support Coverage, APO defines a \textbf{Safe Manifold}—the high-confidence support set of the reference model. We propose that regularization should be active and conditional: valid sharpening is permitted to maximize efficiency, but encountering an error triggers a restorative force toward the Safe Manifold. Crucially, APO resolves the Gradient Conflict through Ratio Rectification. By injecting a pulling force directly into the policy ratio, APO generates gradients that are intrinsically aligned with the reward signal, facilitating an \textbf{Elastic Recovery} that re-inflates neglected valid branches without the rigid overhead of density matching (Figure~\ref{fig:method_overview}(c)).

We substantiate this approach through theoretical analysis, demonstrating that the APO rectification term functions as a gradient-aligned proxy for maximizing Support Coverage. We derive that our update rule produces gradients strictly collinear with maximizing the total mass of the Reference Model's Safe Manifold during error correction, providing a formal guarantee for the recovery mechanism detailed in Section~\ref{sec:method}. Empirical evaluations on five mathematical benchmarks—including \textbf{AIME 24/25}, \textbf{Math500}, and \textbf{Minerva}—confirm that APO breaks the accuracy-diversity trade-off, outperforming KL-based baselines by up to \textbf{6\%} in $Pass@1$ efficiency while recovering \textbf{1.5\%--3.3\%} in $Pass@K$ diversity typically lost in standard policy gradient methods\footnote{Our code will be available at \url{https://github.com/1BIMU/APO_OFFICAL}.}.

\section{Preliminaries}
\subsection{RLVR and the Shape Constraint}

We focus on Reinforcement Learning with Verifiable Rewards (RLVR) using algorithms like PPO \citep{schulman2017proximalpolicyoptimizationalgorithms} and GRPO \citep{shao2024deepseekmathpushinglimitsmathematical}. These methods optimize a clipped surrogate objective to maintain policy stability. Letting $r_t(\theta) = \frac{\pi_{\theta}(y_t|x, y_{<t})}{\pi_{\text{old}}(y_t|x, y_{<t})}$ denote the probability ratio and $A_t$ the estimated advantage, the objective is:
\begin{equation}
    \mathcal{L}(\theta) = \mathbb{E} \left[ \min \left( r_t(\theta) A_t, \text{clip}(r_t(\theta), 1-\epsilon, 1+\epsilon) A_t \right) \right].
\end{equation}
Standard approaches augment this with a KL penalty, $D_{\text{KL}}(\pi_{\theta} \| \pi_{\text{ref}})$. We define this as a \textbf{Shape Constraint}: minimizing KL imposes a global density matching requirement. This forces $\pi_{\theta}$ to mimic the reference model's full distribution profile—including noise—rather than solely retaining its knowledge support, hindering the sharpening required for Pass@1 efficiency.

\subsection{The Squeezing Effect}
\label{sec:prelim_squeezing}

Negative feedback on Softmax-parameterized policies induces a pathological dynamic termed the \textit{Squeezing Effect} \citep{ren2025learning}. Consider minimizing the probability of an error token $y_{\text{err}}$. The gradient update for the logit $z_k$ of any alternative token $k$ ($k \neq y_{\text{err}}$) follows:
\begin{equation}
    \Delta z_k \propto -\nabla_{z_k} \pi_{\theta}(y_{\text{err}}) \propto \pi_{\theta}(y_{\text{err}}) \cdot \pi_{\theta}(k).
\end{equation}

This reveals a \textit{Rich-get-Richer} dynamic \citep{ren2025learning}: the update magnitude for a token is strictly proportional to its current probability $\pi_{\theta}(k)$. Unlike linear redistribution, this mechanism disproportionately boosts the dominant path (where $\pi_{\theta}$ is large) while providing negligible updates to lower-probability valid branches. Consequently, negative feedback accelerates an irreversible collapse into low-entropy states, effectively ``starving'' the exploration of diverse reasoning paths.

\begin{figure*}[t]
    \centering
    \includegraphics[width=0.95\textwidth]{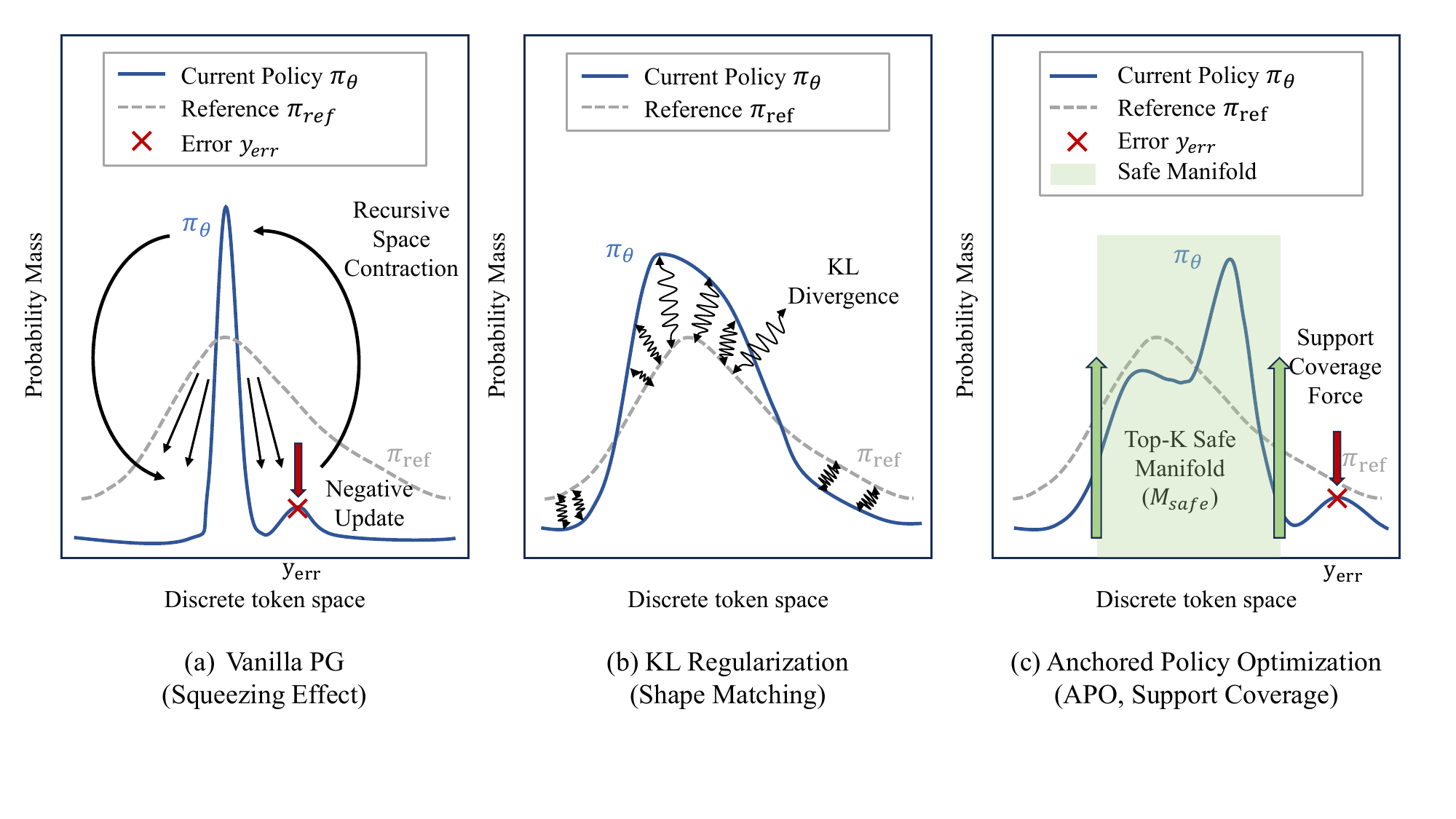}
    \vspace{-0.9cm}
    \caption{\textbf{Geometric Interpretation of Regularization Paradigms.} 
    \textbf{(a) Recursive Space Contraction (Vanilla PG):} The interplay of positive sharpening and blind negative squeezing drives probability mass to collapse into a single narrow path, permanently discarding valid support.
    \textbf{(b) Shape Matching (KL Regularization):} KL imposes a rigid constraint (visualized as springs) that forces $\pi_{\theta}$ to mimic the exact density profile of $\pi_{\text{ref}}$, prohibiting the local sharpening necessary for high Pass@1 efficiency.
    \textbf{(c) Support Coverage (APO, Ours):} Our method maximizes mass coverage within a Safe Manifold ($\mathcal{M}_{\text{safe}}$). This permits aggressive sharpening for accuracy, while the anchor force provides \textbf{Elastic Recovery} during error correction to prevent leakage into invalid regions.}
    \label{fig:method_overview}
    \vspace{-0.4cm}
\end{figure*}

\section{Theoretical Framework}

In this section, we provide a rigorous analysis of the optimization dynamics in RLVR. We first analyze why the Squeezing Effect (defined in Section~\ref{sec:prelim_squeezing}) leads to irreversible pruning in reasoning tasks. Subsequently, we frame the regularization problem through a geometric lens, contrasting the restrictive \textit{Shape Matching} paradigm of KL divergence with our proposed \textit{Support Coverage} objective.

\subsection{The Dual Dynamics of Recursive Space Contraction}
\label{sec:recursive_dynamics}

While \citet{ren2025learning} identified the \textit{Squeezing Effect} in offline settings, we define \textbf{Recursive Space Contraction (RSC)} as a systemic pathology in on-policy RL driven by the interplay of positive and negative feedback. Unlike offline frameworks with fixed data, on-policy optimization relies on stochastic sampling, creating a self-reinforcing loop: as the probability of valid alternatives declines, their sampling frequency vanishes, severing the gradient signal required for recovery.

Crucially, RSC is a compound result of dual update dynamics. During positive updates (sharpening), increasing the probability of a correct path passively suppresses all unsampled tokens—including valid alternatives—via partition function inflation. Critically, subsequent negative updates fail to reverse this: the \textit{Squeezing Effect} blindly redistributes the released mass to already dominant tokens rather than re-inflating the suppressed tail. Consequently, once a valid reasoning path drifts into the ``tail'', it becomes mathematically unreachable regardless of its validity (see formal derivation in Appendix~\ref{app:theoretical_derivation}). This results in an \textbf{irreversible} collapse, where the policy permanently discards latent reasoning capabilities, as visualized in Figure~\ref{fig:backtracking}.

\begin{table}[h]
    \centering
    \caption{\textbf{Oracle Coverage Analysis (Motivation).} Standard RLVR tends to collapse distribution towards the path (Top-1). However, our analysis reveals that the path only captures 83.84\% of correct reasoning steps. In contrast, the \textit{Safe Manifold} defined by Top-8 covers 97.47\%. This implies that blind squeezing is mathematically guaranteed to eliminate $\sim$16\% of valid reasoning paths, necessitating a support-constrained approach.}
    \label{tab:motivation_coverage}
    \vspace{0.2cm}
    \resizebox{0.48\textwidth}{!}{
        \setlength{\tabcolsep}{8pt}
        \renewcommand{\arraystretch}{1.2}
        \begin{tabular}{lcc}
            \toprule
            \textbf{Safe Manifold Scope} & \textbf{Recall (Coverage)} & \textbf{Loss Rate} \\
            \midrule
            \textbf{The Path (Top-1)} & 83.84\% & \textcolor{red}{\textbf{16.16\%}} \\
            Top-4 & 95.52\% & 4.48\% \\
            Top-8 & \textbf{97.47\%} & \textbf{2.53\%} \\
            Top-16 & 98.46\% & 1.54\% \\
            \bottomrule
        \end{tabular}
    }
\end{table}

\subsection{Empirical Motivation: The Poverty of the Path}
\label{sec:empirical_motivation}

The concern regarding the RSC raises a critical question: If RL collapses the policy onto the single most probable path, does collapsing onto the most likely path force the model to ignore other correct ways to solve the problem?

To quantify this, we analyze the oracle coverage rate of Qwen2.5-Math-7B on 1,000 randomly sampled MATH\citep{hendrycksmath2021} problems. We employ a teacher-forcing protocol, feeding ground-truth prefixes $y^*_{<t}$ to compute next-token logits. We measure \textit{Recall}: the probability that the actual ground truth token $y^*_t$ falls within the Top-$K$ prediction set: $\mathbb{P}(y^*_t \in \text{TopK}(\pi_{\text{ref}}(\cdot | x, y^*_{<t})))$.

Table~\ref{tab:motivation_coverage} reveals a critical insight: strictly trusting the greedy path (Top-1) incurs a substantial \textbf{16.16\% Loss Rate} of valid tokens. This quantitative evidence exposes a fundamental flaw in the prevailing ``Tree Pruning'' hypothesis, suggesting that optimizing exclusively for the single best path reduces not just solution diversity, but fundamental \textit{validity}. Conversely, a modest expansion of the scope to the Top-8 recovers nearly the entire validity space (97.47\% coverage) while preserving a computationally sparse search space. This significant empirical gap—between the poverty of the single path and the richness of its local support—serves as the primary motivation for our transition from rigid Shape Matching to targeted Support Coverage within the Top-8 Safe Manifold.
\subsection{From Shape Matching to Support Coverage}
\label{sec:shape_vs_support}
To mitigate the aforementioned collapse, we categorize existing solutions and our proposed method based on the geometric nature of their constraints.

\textbf{Paradigm A: Shape Matching (Standard KL Regularization).} 
Standard RLVR utilizes the Kullback-Leibler divergence $D_{\text{KL}}(\pi_{\theta} \| \pi_{\text{ref}})$ as a penalty. Minimizing this divergence forces the policy $\pi_{\theta}$ to align its entire probability density function with $\pi_{\text{ref}}$. Mathematically, this imposes a \textit{global} constraint:
\begin{equation}
    \pi_{\theta}(y) \approx \pi_{\text{ref}}(y), \quad \forall y \in \mathcal{V}.
\end{equation}
We term this \textit{Shape Matching} because it compels the student model to mimic the exact ``shape'' of the teacher's distribution, including its uncertainties. While robust, this constraint is agnostic to the correctness of the reasoning path; it penalizes the RL agent for becoming confident (sharpening) even on correct answers.

\textbf{Paradigm B: Support Coverage (Anchored Policy Optimization).} 
We propose a relaxation of the global shape constraint, focusing instead on the geometric support. We define the \textit{Safe Manifold} $\mathcal{M}_{\text{safe}}$ as the high-confidence support set of the reference model:
\begin{equation}
    \mathcal{M}_{\text{safe}} = \{ y \in \mathcal{V} \mid y \in \text{TopK}(\pi_{\text{ref}}) \}.
    \label{eq:safe_manifold}
\end{equation}
Ideally, we seek to maximize the \textit{Support Coverage Objective}:
\begin{equation}
    \mathcal{J}_{\text{support}}(\theta) = \sum_{y \in \mathcal{M}_{\text{safe}}} \pi_{\theta}(y).
\end{equation}
Unlike KL, which acts as a continuous penalty pulling the policy towards the reference center, Support Coverage acts as a boundary condition. It allows the policy to distribute mass freely \textit{within} $\mathcal{M}_{\text{safe}}$ (enabling sharpening for Pass@1) but requires a restoring mechanism when mass leaks to the unsafe region $\mathcal{V} \setminus \mathcal{M}_{\text{safe}}$. 

While this restricts the optimization to the reference's prior support, we argue that paths outside this manifold are practically unreachable via sampling in large-scale action spaces (see Appendix~\ref{app:reachability} for a detailed discussion on reachability).
\begin{figure}[t]
    \centering
    \includegraphics[width=1.0\columnwidth]{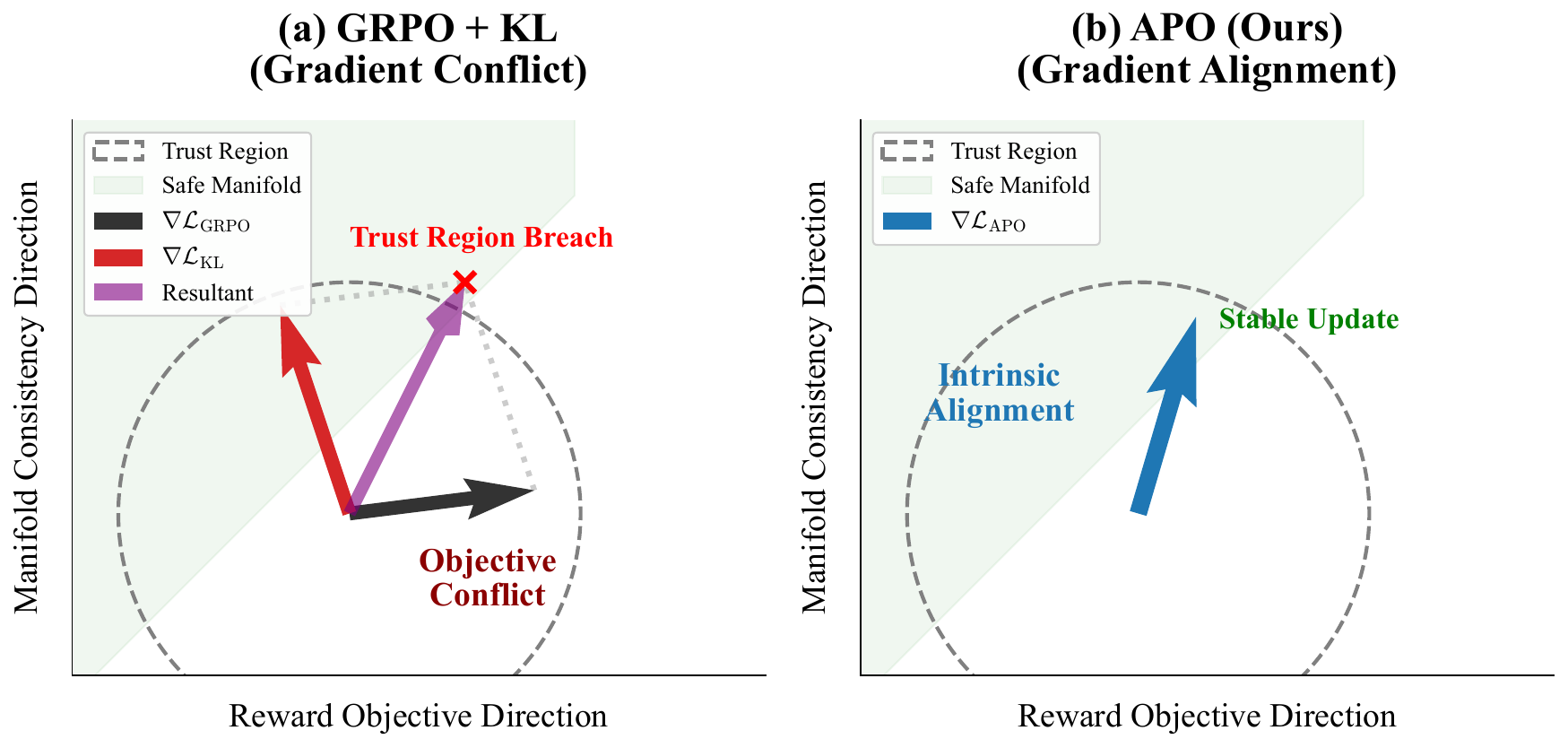}
    \caption{\textbf{Geometric Interpretation of Gradient Dynamics.} The x and y axes represent the directions of reward maximization and manifold consistency, respectively. \textbf{(a) GRPO + KL (Baseline):} The KL penalty gradient ($\nabla \mathcal{L}_{\mathrm{KL}}$, \textcolor[HTML]{D62728}{Red}) conflicts with the reward gradient ($\nabla \mathcal{L}_{\mathrm{GRPO}}$, Black), driving the resultant vector (Purple) to breach the Trust Region. \textbf{(b) APO (Ours):} APO produces a unified gradient ($\nabla \mathcal{L}_{\mathrm{APO}}$, \textcolor[HTML]{1F77B4}{Blue}) intrinsically aligned with the Safe Manifold, ensuring stable and efficient updates.}
    \label{fig:gradient_alignment}
\end{figure}

However, directly optimizing $\mathcal{J}_{\text{support}}$ requires explicit auxiliary losses introduces a \textbf{Gradient Conflict}. As illustrated in Figure~\ref{fig:gradient_alignment}(a), an auxiliary loss term creates an orthogonal gradient component that competes with the reward signal, potentially dragging the update step outside the PPO trust region (see Appendix~\ref{app:stability_comparison} for a stability analysis). In the following Section~\ref{sec:method}, we introduce APO as a gradient-aligned modification (Figure~\ref{fig:gradient_alignment}(b)). Instead of adding an external force, APO injects a restoring force strictly collinear with $\nabla \mathcal{J}_{\text{support}}$ intrinsic to the policy ratio, ensuring stable updates when negative feedback signals a potential collapse.

\begin{figure}[t]
    \centering
    \includegraphics[width=0.9\linewidth]{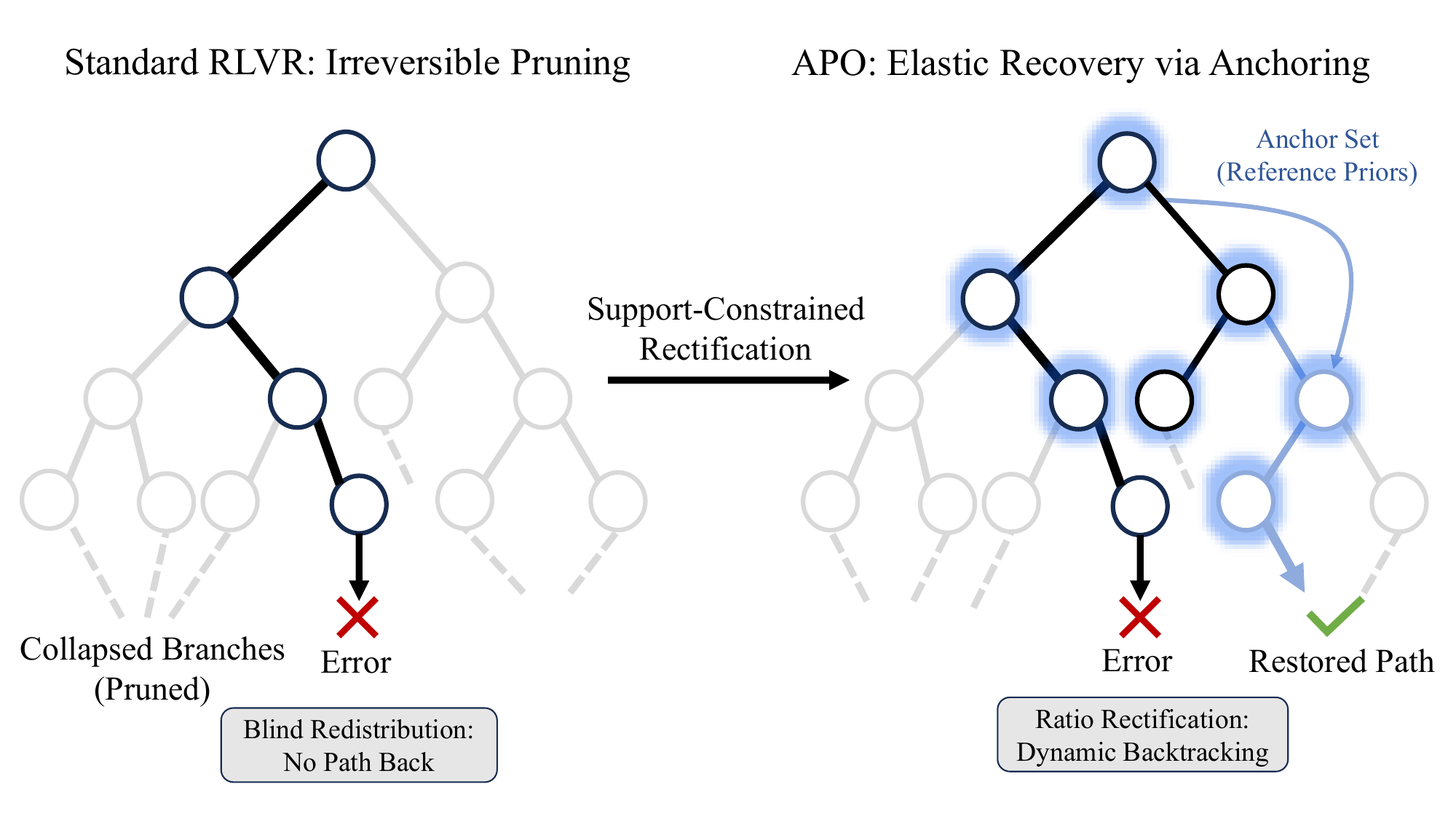}
    \caption{\textbf{Comparison of Error Correction Dynamics.} \textbf{(Left) Irreversible Pruning:} In Standard RLVR, the \textit{Squeezing Effect} causes blind mass redistribution upon error, permanently collapsing alternative branches (greyed out). \textbf{(Right) Elastic Recovery via APO:} When the sharpened path hits an error (Red X), the \textbf{Pull Term} in our rectified ratio acts as a restoring force. It re-activates the \textit{Anchor Set} (glowing blue nodes) derived from the reference priors, allowing the agent to ``backtrack'' and explore valid alternative paths (solid blue arrow) that were previously suppressed.}
    \label{fig:backtracking}
    \vspace{-0.6cm}
\end{figure}
\section{Method}
\label{sec:method}

To dismantle the \textbf{Recursive Space Contraction} loop, we adopt a targeted intervention strategy. We accept the \textit{Sharpening} induced by positive updates as essential for sampling efficiency (Pass@1); therefore, we focus exclusively on rectifying the destruction caused by negative updates.

Standard RLVR algorithms operate via strict penalization: when an error occurs, they suppress the generated path but provide no guidance on alternative directions. This leads to the \textit{Squeezing Effect} \citep{ren2025learning}, where probability mass is blindly redistributed to the already dominant tokens, causing the distribution to collapse rather than recovering valid alternatives.

We introduce \textbf{Anchored Policy Optimization (APO)} to transform this irreversible pruning into an \textbf{Elastic Recovery} process. APO employs a dual-force mechanism: a Push Force to suppress specific errors, and a corresponding Pull Force that actively re-inflates the \textit{Safe Manifold} of the reference model, ensuring the policy backtracks to valid reasoning paths.

\subsection{The Virtual Anchor Ratio and Exclusive Anchoring}
To implement the Pull Force, we construct a proxy for the Safe Manifold's mass. However, a naive maximization of the reference support creates a \textbf{Signal Cancellation}. Since the error token $y_t$ is often high-probability in the reference model (i.e., $y_t \in \text{TopK}(\pi_{\text{ref}})$), including it in the anchor set would cause the optimizer to simultaneously suppress $y_t$ (via the advantage term) and boost $y_t$ (via the anchor term).We provide the analysis of gradient conflicts in Appendix~\ref{app:signal_cancellation}. 

To resolve this, we impose \textbf{Exclusive Anchoring}. We define the \textit{Anchor Set} $S_{\text{anchor}}$ as the Top-K reference tokens, \textit{strictly excluding} the current error token:
\begin{equation}
    S_{\text{anchor}} = \text{TopK}(\pi_{\text{ref}}(\cdot|x)) \setminus \{y_t\}.
\end{equation}
The \textbf{Virtual Anchor Ratio} is defined to measure the policy's coverage of the safe manifold relative to the reference model. Let $Z_{\text{ref}}$ denote the total probability mass of the anchor set under the reference distribution, serving as the normalization constant:
\begin{equation}
    Z_{\text{ref}} = \sum_{j \in S_{\text{anchor}}} \pi_{\text{ref}}(j).
\end{equation}
We define the normalized importance weight $\hat{\omega}_k = \pi_{\text{ref}}(k) / Z_{\text{ref}}$ for each token $k \in S_{\text{anchor}}$. The anchor ratio $r_{\text{anchor}}$ is then constructed via importance sampling:
\begin{equation}
\begin{split}
    r_{\text{anchor}} &= \sum_{k \in S_{\text{anchor}}} \hat{\omega}_k \cdot \frac{\pi_{\theta}(k)}{\pi_{\text{ref}}(k)} \\
    &= \sum_{k \in S_{\text{anchor}}} \frac{\pi_{\text{ref}}(k)}{Z_{\text{ref}}} \cdot \frac{\pi_{\theta}(k)}{\pi_{\text{ref}}(k)} \\
    &= \frac{1}{Z_{\text{ref}}} \sum_{k \in S_{\text{anchor}}} \pi_{\theta}(k).
\end{split}
\end{equation}
This derivation explicitly shows that $r_{\text{anchor}}$ is proportional to the total probability mass $\pi_{\theta}$ assigns to the safe manifold, scaled by the inverse of the reference model's confidence $Z_{\text{ref}}$. This ensures $r_{\text{anchor}}$ acts as a clean, differentiable estimate of the \textit{alternative} safe mass, strictly orthogonal to the error token.

\subsection{Ratio Rectification}
We integrate this anchor into the PPO objective via \textbf{Ratio Rectification}. For a negative advantage sample ($A_t < 0$), we construct the rectified ratio $\tilde{r}_{\text{APO}}$:
\begin{equation}
    \tilde{r}_{\text{APO}} = \underbrace{\lambda \cdot \frac{\pi_{\theta}(y_t)}{\pi_{\text{old}}(y_t)}}_{\text{Push: Suppress Error}} - \underbrace{\beta \cdot r_{\text{anchor}}}_{\text{Pull: Recover Support}}.
    \label{eq:rectified_ratio}
\end{equation}
By minimizing $\tilde{r}_{\text{APO}}$, the optimizer essentially minimizes the difference between the error probability and the safe manifold probability. Appendix~\ref{app:proof_alignment} proves our pull gradient is collinear with the Support Coverage objective, ensuring geometric consistency.

Crucially, our theoretical derivation in Appendix~\ref{app:mass_redistribution} proves that this pull force facilitates \textbf{Elastic Recovery}. Unlike uniform regularization, the restoring gradient scales proportionally with the policy's current confidence ($\nabla \propto \pi_{\theta}$), ensuring that the relative topological importance of valid candidates is preserved during recovery.

\textbf{Computational Efficiency.} Unlike KL divergence, which requires dense gradient propagation over the full vocabulary ($V \approx 128k$), APO operates on a sparse manifold ($K \ll V$, typically $K=8$). This significantly reduces backward pass complexity.
\section{Experiments}
\label{sec:experiments}

In this section, we empirically validate Anchored Policy Optimization (APO) across three diverse language model architectures and five challenging mathematical reasoning benchmarks. Our experiments are designed to answer two central questions: Can APO effectively mitigate the diversity collapse (RSC) observed in standard RLVR? And does APO's support-constrained rectification enable superior performance trade-offs compared to rigid KL regularization?
\subsection{Experimental Setup}

We conduct our main evaluation using three base models with varying scales and capabilities: \textbf{Qwen2.5-7B} \citep{qwen2.5}, its mathematics-specialized variant \textbf{Qwen2.5-Math-7B} \citep{yang2024qwen25mathtechnicalreportmathematical}, and the lightweight \textbf{Llama-3.2-3B-Instruct} \citep{llama3modelcard}. All models are fine-tuned on the \textbf{DAPO-17K} dataset \citep{yu2025dapoopensourcellmreinforcement}. We utilize Group Relative Policy Optimization (GRPO) as the underlying policy gradient algorithm. Comprehensive details regarding the training configurations, including batch sizes, learning rates, and APO-specific coefficients, are provided in Appendix~\ref{app:hyperparameters}.
\begin{table*}[t]
    \centering
    \caption{\textbf{Main Results on Mathematical Reasoning Benchmarks.} We compare APO against the Base Model, standard GRPO, GRPO with KL regularization (GRPO-KL), and the negative-sample focused baseline NSR (evaluated on Qwen2.5-Math-7B). Performance is reported as \textbf{Pass@1} (efficiency, estimated via Avg@K) and \textbf{Pass@K} (diversity/coverage), where $K=256$ for AIME/AMC and $K=16$ for Math500/Minerva. The \textbf{Avg.} column represents the macro-average across all five benchmarks. APO consistently achieves the best balance, improving Pass@1 over the baseline while restoring the Pass@K diversity that standard GRPO sacrifices.}
    \label{tab:main_results}
    \vspace{0.2cm}
    
    \resizebox{\textwidth}{!}{
        \setlength{\tabcolsep}{3.5pt}
        \renewcommand{\arraystretch}{1.15}
        \begin{tabular}{l|cc|cc|cc|cc|cc|cc}
        \toprule
        \multirow{2}{*}{\textbf{Model}} & \multicolumn{2}{c|}{\textbf{AIME 24}} & \multicolumn{2}{c|}{\textbf{AIME 25}} & \multicolumn{2}{c|}{\textbf{AMC 23}} & \multicolumn{2}{c|}{\textbf{Math500}} & \multicolumn{2}{c|}{\textbf{Minerva}} & \multicolumn{2}{c}{\textbf{Avg.}} \\
        \cmidrule(lr){2-3} \cmidrule(lr){4-5} \cmidrule(lr){6-7} \cmidrule(lr){8-9} \cmidrule(lr){10-11} \cmidrule(lr){12-13}
         & Pass@1 & Pass@K & Pass@1 & Pass@K & Pass@1 & Pass@K & Pass@1 & Pass@K & Pass@1 & Pass@K & Pass@1 & Pass@K \\
        \midrule
        
        \multicolumn{13}{l}{\textit{\textbf{Qwen2.5-7B}}} \\
        \quad Base Model & 6.67 & 60.00 & 4.28 & 50.00 & 34.24 & 100.0 & 53.65 & 89.60 & 13.37 & 60.02 & 22.44 & 71.92 \\
        \quad GRPO (Baseline) & 14.31 & 50.00 & 12.20 & 50.00 & 67.71 & 97.50 & 78.53 & \textbf{91.40} & \textbf{40.65} & 63.97 & 42.68 & 70.57 \\
        \quad GRPO-KL & 13.72 & 56.67 & 7.37 & \textbf{53.33} & 61.26 & \textbf{100.0} & 73.83 & 90.00 & 32.97 & 64.71 & 37.83 & 72.94 \\
        \rowcolor{gray!15}
        \quad \textbf{APO (Ours)} & \textbf{15.85} & \textbf{60.00} & \textbf{13.50} & 50.00 & \textbf{69.75} & \textbf{100.0} & \textbf{79.60} & 90.00 & 39.17 & \textbf{66.91} & \textbf{43.57} & \textbf{73.38} \\
        \midrule
        
        \multicolumn{13}{l}{\textit{\textbf{Qwen2.5-Math-7B}}} \\
        \quad Base Model & 13.66 & \textbf{73.33} & 7.02 & \textbf{56.67} & 45.62 & \textbf{100.0} & 57.70 & 89.60 & 14.50 & 65.15 & 27.70 & 76.95 \\
        \quad GRPO (Baseline) & 32.63 & 70.00 & 14.99 & \textbf{56.67} & 69.50 & 97.50 & 81.25 & 93.20 & 45.01 & 65.44 & 48.68 & 76.56 \\
        \quad NSR & 32.10 & \textbf{73.33} & 14.31 & \textbf{56.67} & 70.10 & 97.50 & 81.48 & 93.40 & 44.93 & 66.12 & 48.58 & 77.40 \\
        \quad GRPO-KL & 28.59 & 70.00 & 10.68 & 46.67 & 68.04 & \textbf{100.0} & 77.68 & 91.40 & 37.68 & \textbf{67.28} & 44.53 & 75.07 \\
        \quad GRPO-KL (Error-Only) & 29.87 & \textbf{73.33} & 12.24 & 43.33 & 75.24 & \textbf{100.0} & 79.51 & 92.40 & 37.11 & 66.18 & 46.79 & 75.05 \\
        \rowcolor{gray!15}
        \quad \textbf{APO (Ours)} & \textbf{33.40} & \textbf{73.33} & \textbf{16.28} & \textbf{56.67} & \textbf{75.51} & \textbf{100.0} & \textbf{82.65} & \textbf{94.00} & \textbf{45.50} & 66.37 & \textbf{50.67} & \textbf{78.07} \\
        \midrule
        
        \multicolumn{13}{l}{\textit{\textbf{Llama-3.2-3B-Instruct}}} \\
        \quad Base Model & 6.81 & \textbf{36.67} & 0.15 & \textbf{30.00} & 24.38 & \textbf{85.00} & 44.57 & \textbf{75.60} & 17.62 & \textbf{48.53} & 18.79 & \textbf{55.11} \\
        \quad GRPO (Baseline) & 10.85 & 26.67 & 0.20 & 10.00 & 46.00 & 72.50 & 47.91 & 68.00 & \textbf{20.43} & 41.91 & 25.08 & 43.82 \\
        \quad GRPO-KL & 8.09 & 26.67 & 0.33 & 20.00 & 37.08 & 75.00 & 44.71 & 68.60 & 19.74 & 44.49 & 21.99 & 46.95 \\
        \rowcolor{gray!15}
        \quad \textbf{APO (Ours)} & \textbf{11.12} & 26.67 & \textbf{0.55} & 20.00 & \textbf{46.52} & 75.00 & \textbf{49.43} & 70.20 & 19.94 & 43.65 & \textbf{25.51} & 47.11 \\
        \bottomrule
        \end{tabular}
    }
    \vspace{-0.3cm}
\end{table*}
\noindent\textbf{Baselines.} To rigorously evaluate the effectiveness of our approach, we compare APO against standard \textbf{GRPO} and GRPO with KL regularization (\textbf{GRPO-KL}). Furthermore, given that our primary motivation focuses on the nuanced handling of negative samples to mitigate the RSC, we introduce \textbf{NSR} (Negative Sample Reinforcement) as a specific baseline. Due to the specialized nature of this baseline and resource allocation, we evaluate NSR exclusively on the \textbf{Qwen2.5-Math-7B} model to analyze its impact on a strong mathematical reasoning backbone.

We evaluate performance on five standard benchmarks: AIME 2024 \citep{aime2024}, AIME 2025 \citep{aime2024}, AMC 2023 \citep{amc2023}, MATH-500 \citep{hendrycksmath2021}, and Minerva Math \citep{lewkowycz2022solvingquantitativereasoningproblems}. To capture both the accuracy and the exploration capability of the models, we report two metrics: \textbf{Pass@1} and \textbf{Pass@K}. Specifically, we calculate Pass@1 as the average accuracy across $K$ generated samples (i.e., Avg@K), serving as an unbiased estimator of the policy's expected efficiency. Pass@K denotes the probability of generating at least one correct solution (coverage). We employ a hierarchical evaluation strategy where we use a larger sample budget $K=256$ for the harder, smaller datasets (AIME 24/25, AMC 23) to ensure statistical significance, while utilizing $K=16$ for the larger datasets (MATH-500, Minerva Math) where performance estimates stabilize more rapidly.

\subsection{Main Results and Analysis}

\begin{figure}[t]
    \centering
    \includegraphics[width=0.85\columnwidth]{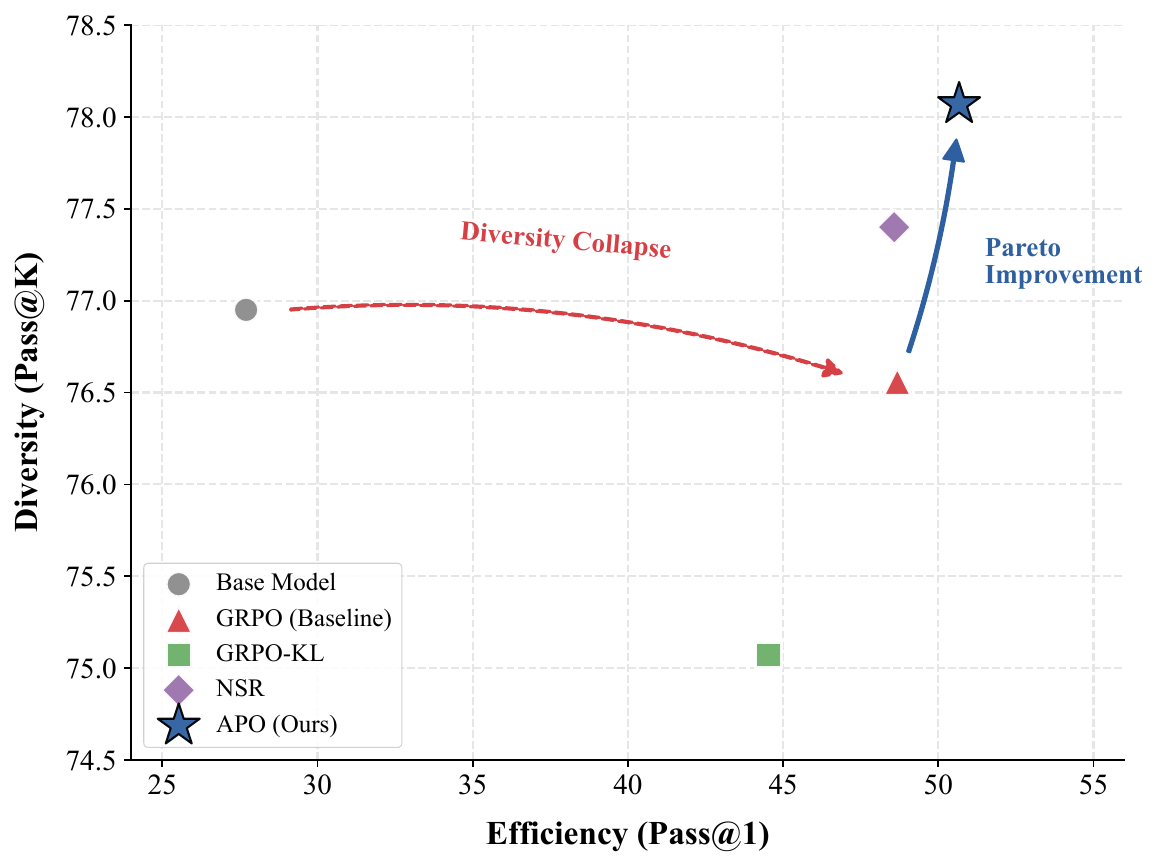}
    \caption{\textbf{Efficiency-Diversity Pareto Frontier.} Comparison of APO against baselines on Qwen2.5-Math-7B. The \textcolor[HTML]{D65F5F}{dashed red arrow} illustrates the \textit{Diversity Collapse} phenomenon observed in standard GRPO, where efficiency gains come at the cost of coverage. The \textcolor[HTML]{2E5FA1}{solid blue arrow} highlights APO's \textit{Pareto Improvement}, achieving state-of-the-art Pass@1 efficiency while simultaneously restoring the diverse support lost during RL training.}
    \label{fig:pareto_frontier}
    \vspace{-0.4cm}
\end{figure}

Table~\ref{tab:main_results} presents a comparative analysis across all models and benchmarks, empirically delineating the failure path of existing paradigms and the efficacy of APO. The standard GRPO baseline, while effective at boosting sampling efficiency (Pass@1), exhibits a efficiency-diversity trade-off. As visualized in Figure~\ref{fig:pareto_frontier}, while GRPO improves Pass@1, it induces a catastrophic collapse in generation diversity. For instance, on Llama-3.2-3B, Pass@K plummets to 43.82\%. This sharp inverse correlation confirms that standard negative reinforcement blindly prunes the reasoning tree, sacrificing the model's latent coverage for local sharpness—a hallmark of the \textit{RSC}.

We first examine the \textbf{NSR} baseline on Qwen2.5-Math-7B to isolate the impact of negative sample re-weighting. As shown in Table~\ref{tab:main_results}, while NSR marginally mitigates diversity loss (Pass@K: 77.40\% vs. 76.56\% for GRPO), it fails to translate this exploration into reliable efficiency gains (Pass@1 remains stagnant at 48.58\%). This suggests that scalar re-weighting without topological constraints is insufficient to induce ``Elastic Reasoning''; the model learns to avoid specific errors but lacks a geometric guide to recover valid alternatives.

\textbf{Timing vs. Topology: The Limits of KL Regularization.}
To disentangle the impact of \textit{Timing} (regularizing strictly during negative feedback) versus \textit{Topology} (Safe Manifold vs. Shape Matching), we evaluate \textbf{GRPO-KL (Error-Only)}. This setup isolates the temporal effect by applying the KL penalty solely to errors, theoretically allowing correct paths to sharpen. This comparison reveals whether the failure of standard regularization stems from its activation schedule or the rigid geometric nature of the KL constraint itself.

The results establish a clear hierarchy of regularization efficacy. While GRPO-KL (Error-Only) outperforms the global GRPO-KL (Pass@1: 46.79\% vs. 44.53\%), validating that regression should indeed be conditional, a critical anomaly remains: \textbf{it still underperforms the unregularized GRPO baseline (48.68\%)}. 

This finding isolates the fundamental limitation of the KL divergence objective itself. Even when applied conditionally, KL enforces a rigid \textit{Shape Matching} constraint, compelling the policy to mimic the reference model's noise profile rather than simply recovering its valid support. This proves that optimizing the \textit{Timing} of regularization is futile without fundamentally altering the \textit{Topology} of the constraint.

\textbf{Breaking the Trade-off with Support Constraints.}
APO successfully reconciles this conflict. By combining conditional activation with a support-based constraint, APO achieves a dominant average Pass@1 of \textbf{50.67\%} on Qwen2.5-Math-7B—significantly outperforming both the efficiency-focused GRPO and diversity-oriented baselines—while maintaining superior coverage (\textbf{78.07\%} Pass@K). These results confirm that APO does not merely navigate the trade-off but effectively \textbf{shifts the Pareto frontier} (Figure~\ref{fig:pareto_frontier}), enabling aggressive local optimization without the irreversible amputation of valid reasoning branches.
\section{Analysis}

\subsection{Sensitivity Analysis: Hyperparameters and Manifold Geometry}
\label{sec:ablation_study}

\begin{figure*}[t]
    \centering
    \includegraphics[width=0.95\textwidth]{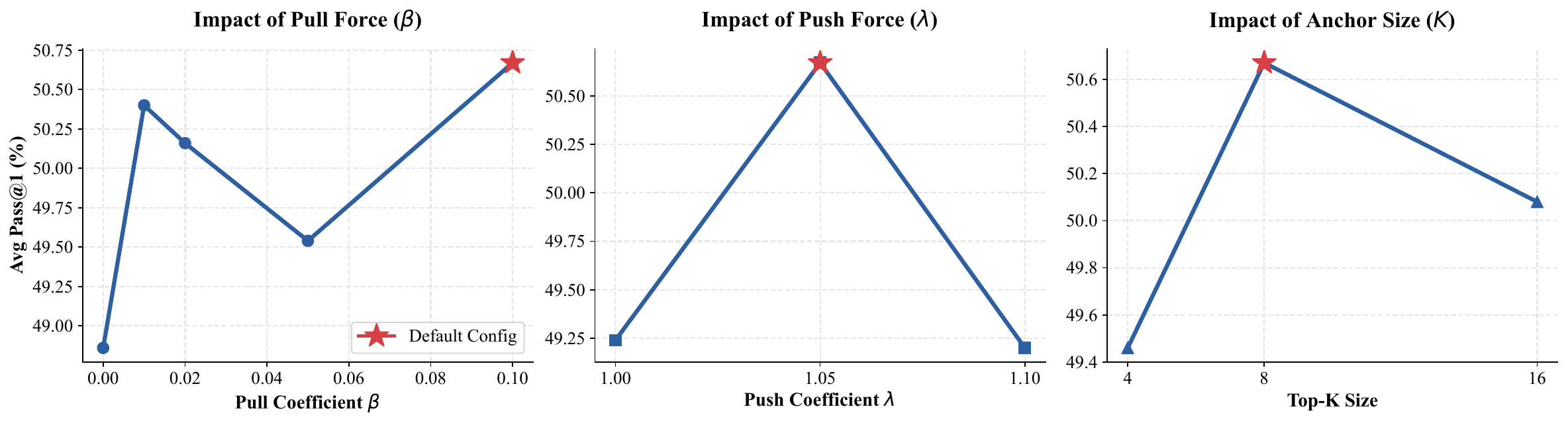}
    \vspace{-0.2cm}
    \caption{\textbf{Hyperparameter Sensitivity Analysis.} We visualize the impact of the Pull coefficient $\beta$ (Left), Push coefficient $\lambda$ (Middle), and Anchor Size $K$ (Right) on Pass@1 performance. The \textcolor[HTML]{D65F5F}{Red Star} denotes our default configuration. The inverted-U trends in the middle and right plots confirm that APO operates in a optimal trade-off, where the regularization is strong enough to correct errors but compliant enough to permit valid sharpening.}
    \label{fig:ablation_study}
    \vspace{-0.3cm}
\end{figure*}

We validate our geometric hyperparameters on Qwen2.5-Math-7B: anchor set size $K$ (manifold boundary), pull coefficient $\beta$ (restoring force), and push coefficient $\lambda$ (error suppression). We verify our default configuration ($\beta=0.1, \lambda=1.05, K=8$). The overall trends are visualized in Figure~\ref{fig:ablation_study}, and the detailed numerical breakdown is provided in Table~\ref{tab:app_ablation_hyperparams} of Appendix~\ref{app:sensitivity_results}.

\textbf{Geometry of the Safe Manifold ($K$).} $K$ determines the trade-off between strictness and exploration. Consistent with our hypothesis, $K=8$ provides the optimal trade-off (see Figure~\ref{fig:ablation_study}, Right). Narrower manifolds ($K=4$) overly restrict valid reasoning paths (Mean Avg 49.46\%), while broader manifolds ($K=16$) introduce tail noise that dilutes the rectification signal, despite minor diversity gains.

\textbf{Impact of the Anchor Force ($\beta$).} $\beta$ governs the elasticity of recovery. Comparing our default $\beta=0.1$ against unanchored ($\beta=0$) and weaker variants reveals a critical insight: removing the anchor causes a significant performance drop (Mean Avg 48.86\%), confirming that blind negative updates are destructive. As shown in Figure~\ref{fig:ablation_study} (Left), increasing $\beta$ to $0.1$ achieves the highest Pass@1 (50.67\%), indicating a strong restoring force is necessary to pull trajectories back to the safe manifold. While smaller $\beta$ (e.g., $0.02$) yields marginally higher coverage, it fails to correct errors aggressively, reducing overall accuracy.

\begin{table}[h]
    \centering
    \caption{\textbf{Distributional Health Analysis on Minerva (Llama-3.2-3B-Instruct).} GRPO exhibits \textit{Exploration Collapse}. APO restores entropy and diversity while improving coverage.}
    \label{tab:distribution_metrics}
    \vspace{0.1cm}
    \resizebox{0.48\textwidth}{!}{
        \begin{tabular}{l|c|cc|cc}
            \toprule
            \multirow{2}{*}{\textbf{Method}} & \textbf{Performance} & \multicolumn{2}{c|}{\textbf{Output Diversity}} & \multicolumn{2}{c}{\textbf{Internal Uncertainty}} \\
            \cmidrule(lr){2-2} \cmidrule(lr){3-4} \cmidrule(lr){5-6}
             & Pass@K ($\uparrow$) & Div. Score ($\uparrow$) & Sem. Sim. ($\downarrow$) & Entropy ($\uparrow$) & MaxProb ($\downarrow$) \\
            \midrule
            GRPO & 41.91 & 0.267 & 0.314 & 0.405 & 0.878 \\
            \rowcolor{gray!15}
            \textbf{APO (Ours)} & \textbf{43.65} & \textbf{0.283} & \textbf{0.297} & \textbf{0.485} & \textbf{0.849} \\
            \bottomrule
        \end{tabular}
    }
\end{table}

\textbf{Impact of Error Suppression ($\lambda$).} Parameter $\lambda$ controls error suppression intensity. As shown in Figure~\ref{fig:ablation_study} (Middle) and Table~\ref{tab:app_ablation_hyperparams}, performance follows an inverted U-shape. Standard rejection ($\lambda=1.0$) yields suboptimal efficiency (49.24\%), indicating standard gradients are insufficient for stubborn hallucinations. Conversely, excessive penalty ($\lambda=1.1$) degrades performance (49.20\%) by inducing \textit{Over-pruning} of recoverable paths. $\lambda=1.05$ offers the precise magnitude to prune incorrect branches without collapsing the manifold.

\subsection{Micro-Level Analysis: Mitigating Distribution Collapse}
\label{sec:distribution_analysis}

We analyze policy behavior on Minerva ($K=32$) along two axes: Output Diversity and Internal Confidence. We measure diversity via \textbf{Diversity Score} ($1 - \text{Self-BLEU}$) \cite{zhu2018texygenbenchmarkingplatformtext} and semantic similarity using \textbf{SentenceBERT} \cite{reimers-2019-sentence-bert}. Internal exploration is quantified by \textbf{Token Entropy} and \textbf{Mean MaxProb}, where high MaxProb signals potential path collapse.

Table~\ref{tab:distribution_metrics} highlights the \textit{RSC} in GRPO, evidenced by low entropy (0.405) and high MaxProb (0.878)—signs of over-confidence and path collapse. In contrast, APO restores healthy uncertainty (Entropy 0.485). The temporal evolution of this phenomenon is visualized in Figure~\ref{fig:entropy_curve}, showing GRPO's rapid collapse versus APO's sustained exploration. Crucially, this reflects an \textbf{Elastic Probability Landscape} rather than confusion: the superior Pass@K (43.65\%) confirms that APO retains sufficient probability mass on valid alternative paths, validating that the rectification term effectively prevents irreversible pruning.

\begin{figure}[t]
    \centering
    \includegraphics[width=0.9\linewidth]{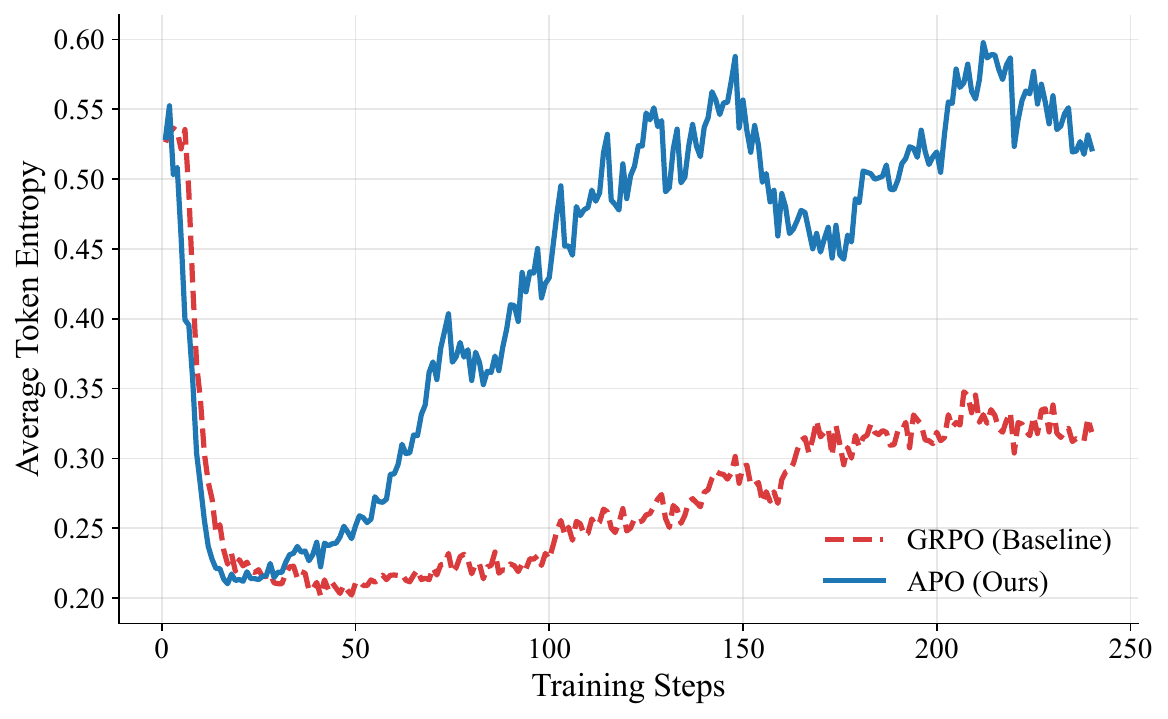}
    \vspace{-0.2cm}
    \caption{\textbf{Evolution of Token Entropy.} While both methods experience an initial drop, APO (\textcolor{blue}{Blue}) demonstrates a robust \textbf{entropy recovery} as training progresses, whereas GRPO (\textcolor{red}{Red}) remains at a significantly lower entropy level. This confirms that our anchor mechanism effectively revitalizes the exploration potential of the safe manifold.}
    \label{fig:entropy_curve}
    \vspace{-0.3cm}
\end{figure}
\section{Related Work}

\subsection{Mitigating Exploration Collapse in RLVR}

Output distribution narrowing—often termed ``path collapse''—is a primary bottleneck in Reinforcement Learning with Verifiable Rewards (RLVR), severely degrading solution diversity (Pass@k) \citep{wang20258020rulehighentropyminority,yue2025doesreinforcementlearningreally}. Recent literature addresses this through three main strategies.

\textbf{Entropy Regularization.} Several works attribute diversity loss to entropy collapse, proposing to explicitly regularize policy entropy to encourage broader exploration \citep{cui2025entropymechanismreinforcementlearning, cheng2025reasoningexplorationentropyperspective, liang2025pass1selfplayvariationalproblem}. These methods typically employ auxiliary loss terms to penalize low-entropy distributions, forcing the model to maintain a wider output range.

\textbf{Direct Metric Optimization.} Other approaches shift the objective from standard reward maximization to directly optimizing Pass@k \citep{mahdavi2025beyond, walder2025passk}. As Pass@k is a direct proxy for solution diversity, these methods explicitly incentivize the generation of heterogeneous successful trajectories.

\textbf{Training Paradigms and Data Augmentation.} A third strategy modifies the macro-level training setup. This includes incorporating diverse datasets \citep{yan2025learningreasonoffpolicyguidance, dong2025rlpluscounteringcapabilityboundary}, tuning hyperparameters for exploration \citep{he2025skyworkopenreasoner1, yu2025dapoopensourcellmreinforcement}, or interleaving RL with Supervised Fine-Tuning (SFT) to retain base capabilities \citep{liu2025acereasonnemotron11advancingmath}.

\subsection{Relationship to Our Work}

Our proposed Anchored Policy Optimization (APO) offers a perspective \textbf{orthogonal} to these strategies. While prior methods explicitly incentivize diversity (via entropy or metric optimization) or augment training data, we address the fundamental mechanics of the policy update relative to the reference model.

We view the decline in Pass@k as a symptom of the \textit{RSC}, where the policy drifts from the reference support. Rather than forcing diversity as an auxiliary objective, APO focuses on \textbf{consistency}: anchoring the policy back to the reference model's \textit{Safe Manifold} upon errors. By maximizing Support Coverage, APO recovers diversity naturally as a byproduct of valid exploration and can be seamlessly integrated with the aforementioned methods. A more detailed comparison is provided in Appendix~\ref{app:related_work_discussion}.
\section{Conclusion}
We define RSC as the systemic collapse driven by the interplay of negative squeezing and positive sharpening, identifying it as the root of diversity loss in RLVR. While standard KL fails as a rigid Shape Matching paradigm, our Anchored Policy Optimization (APO) shifts the objective toward Support Coverage within a Safe Manifold. By enabling Elastic Recovery, APO breaks the accuracy-diversity trade-off—maximizing Pass@1 efficiency while revitalizing the Pass@K coverage typically lost to irreversible pruning.
\section*{Impact Statement}
This paper presents work whose goal is to advance the field of machine learning. There are many potential societal consequences of our work, none of which we feel must be specifically highlighted here.

\bibliography{cite}
\bibliographystyle{icml2026}

\newpage
\appendix
\onecolumn
\section{Extended Discussion and Generalizability}
\subsection{Comparison with Existing Mitigation Strategies}
\label{app:related_work_discussion}

In this section, we provide a more granular comparison between Anchored Policy Optimization (APO) and existing mitigation strategies for exploration collapse. We focus specifically on how these methods utilize the reference model and their distinct mechanisms.

\paragraph{Active Utilization of Reference Knowledge.} 
The most significant distinction of APO lies in its treatment of the reference model ($\pi_{\text{ref}}$). Standard RL algorithms, such as PPO and GRPO, treat $\pi_{\text{ref}}$ primarily as a passive constraint—a ``boundary'' defined by KL divergence to prevent excessive drift. In this setting, once the policy collapses into a narrow path (even if that path is suboptimal), the KL penalty merely constrains the policy to remain within the vicinity of that collapsed state, failing to provide a mechanism for recovery. In contrast, APO redefines $\pi_{\text{ref}}$ as an active \textit{anchor} of knowledge. By identifying the \textit{Safe Manifold} within the reference distribution, APO actively utilizes the reference model's logits to construct a ``pulling force'' when the policy makes errors. This mechanism does not merely penalize deviation; it constructively guides the policy back to valid reasoning paths that may have been forgotten during optimization. This explains the dual improvement observed in our experiments: the ``Push'' term corrects errors to improve Pass@1, while the ``Pull'' term restores the rich support of the reference model.

\paragraph{Orthogonality with Exploration-Centric Paradigms.} 
Existing literature largely approaches the diversity problem through the lens of \textit{forward exploration}. Methods focusing on Entropy Regularization \citep{cui2025entropymechanismreinforcementlearning} or Direct Metric Optimization \citep{mahdavi2025beyond} incentivize the model to widen its output distribution indiscriminately or to directly maximize the probability of diverse successes. Similarly, data-centric approaches focus on expanding the training distribution itself. APO operates on a fundamentally different, orthogonal dimension: the \textit{optimization dynamics}. We do not force the model to explore new, unknown territories blindly; rather, we ensure it retains the known safe regions of the state space. Consequently, APO is compatible with these methods. One could, for instance, employ data augmentation to expand the known manifold and simultaneously use APO to ensure the optimizer covers this expanded manifold effectively without collapsing.

\begin{table}[h]
    \centering
    \caption{\textbf{Conceptual Comparison of Mitigation Strategies.} We categorize methods based on their primary optimization objective and the role assigned to the reference model. APO is distinct in its use of the reference model as an active guide for manifold restoration rather than a passive constraint or a baseline.}
    \label{tab:method_comparison}
    \vspace{0.2cm}
    \renewcommand{\arraystretch}{1.5}
    \resizebox{\textwidth}{!}{
        \begin{tabular}{l p{6.5cm} p{7.5cm}}
        \toprule
        \textbf{Method Category} & \textbf{Primary Mechanism} & \textbf{Role of Reference Model} \\
        \midrule
        \textbf{Entropy Regularization} & Adds auxiliary loss $\mathcal{H}(\pi)$ to penalize low-entropy distributions and force wider sampling. & \textbf{None / Irrelevant}. Diversity is enforced statistically based on the current policy's distribution, ignoring prior knowledge. \\
        
        \textbf{Direct Metric Opt.} & Directly optimizes non-differentiable metrics like Pass@k via specialized estimators. & \textbf{Baseline}. Typically used implicitly to calculate relative improvements or reward baselines, not for distribution matching. \\
        
        \textbf{Data / Training Paradigm} & Augments datasets or interleaves SFT with RL steps to refresh capabilities. & \textbf{Prior}. Provides the initial weights but does not participate in the RL update dynamics directly. \\
        \rowcolor{gray!10} 
        \textbf{APO (Ours)} & Modifies the gradient update via Ratio Rectification (Push-Pull) on error samples. & \textbf{Active Anchor}. Explicitly used to map the ``Safe Manifold''. Its logits provide the target distribution for the recovery (pull) force. \\
        \bottomrule
        \end{tabular}
    }
\end{table}

\subsection{Generalizability to Standard PPO}
\label{app:generalizability_ppo}

While our experimental evaluation focuses on Group Relative Policy Optimization (GRPO) due to its state-of-the-art efficiency in the domain of mathematical reasoning (RLVR), it is crucial to note that the theoretical framework of Anchored Policy Optimization (APO) is algorithm-agnostic. The core mechanism of APO—Ratio Rectification—operates exclusively on the policy probability ratio $r_t(\theta) = \pi_{\theta}(a_t|s_t) / \pi_{\text{old}}(a_t|s_t)$ and the sign of the advantage estimate $A_t$. Consequently, APO is mathematically compatible with any Policy Gradient algorithm that utilizes a trust region or importance sampling, including standard Proximal Policy Optimization (PPO) with a learned value function (Critic). In a standard PPO setting, the advantage $A_t$ would simply be derived via Generalized Advantage Estimation (GAE) rather than group-relative normalization. We prioritize GRPO in this work solely to align with the current best practices in large-scale reasoning adaptation, where eliminating the Critic model significantly reduces memory overhead and training instability.
\subsection{Discussion on Anchor Validity and Reachability}
\label{app:reachability}
A potential concern regarding APO is the dependence on the quality of the reference anchor: what if the correct reasoning path lies outside the Top-$K$ Safe Manifold?

We address this through the lens of \textbf{Probabilistic Reachability}. In the context of RLVR, the optimization process is primarily a mechanism for extracting and amplifying latent knowledge already present in the Supervised Fine-Tuned (SFT) model, rather than synthesizing new knowledge \textit{ex nihilo}. 

If a valid reasoning path is absent from the Top-$K$ support (where $K$ captures the dense probability mass, e.g., $>95\%$), it typically resides in the heavy tail of the distribution with exponentially decaying probability. Under standard sampling-based exploration, the likelihood of the agent spontaneously discovering such a path is vanishingly small. Therefore, pruning this low-value search space does not theoretically reduce the \textbf{effective reachability} of correct solutions. Conversely, by constraining the search to the \textit{Safe Manifold}, APO prevents the policy from drifting into high-entropy hallucination zones, ensuring that the optimization focuses on refining the most plausible reasoning candidates.

\section{Mathematical Foundations of APO}
\label{app:theoretical_analysis}

\subsection{Proof of Gradient Alignment And Target Consistency}
\label{app:proof_alignment}
In this subsection, we provide a rigorous theoretical justification for the Anchored Policy Optimization (APO) objective. We first prove that our heuristic ``Pull'' term is mathematically equivalent to maximizing support coverage. We then demonstrate via gradient decomposition that the rectified ratio serves as a consistent estimator for a corrected target distribution, and
finally analyze the stability of this mechanism under PPO’s clipping constraints.

\paragraph{Proof of Gradient Alignment}
We formally derive that the update vector of APO, given a negative advantage, is strictly collinear with the gradient of the Support Coverage objective defined as $\mathcal{J}_{\text{support}}(\theta) = \sum_{y \in \mathcal{M}_{\text{safe}}} \pi_{\theta}(y)$.

\begin{proposition}[Gradient Alignment]
In the unclipped regime, the gradient of the APO pull term with respect to a negative advantage is collinear with the gradient of $\mathcal{J}_{\text{support}}$.
\end{proposition}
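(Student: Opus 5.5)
The plan is a direct gradient computation in the unclipped regime, followed by matching the result to $\nabla_\theta \mathcal{J}_{\text{support}}$ restricted to the anchor set. For a negative-advantage token $y_t$, the relevant surrogate is $\tilde{r}_{\text{APO}} A_t$. In the unclipped regime the min and clip are inactive, so the loss contribution equals $A_t\,\tilde r_{\text{APO}}$ exactly, and we may differentiate it term by term.

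The pull contribution to the objective is $-\beta A_t\, r_{\text{anchor}}$. Using the simplification of $r_{\text{anchor}}$ derived in the Virtual Anchor Ratio subsection,
\begin{equation*}
-\beta A_t\, r_{\text{anchor}} = -\frac{\beta A_t}{Z_{\text{ref}}}\sum_{k\in S_{\text{anchor}}}\pi_\theta(k).
\end{equation*}
Three facts make this a constant multiple of the anchor mass. First, $Z_{\text{ref}}$ depends only on $\pi_{\text{ref}}$, so it is constant in $\theta$. Second, $A_t$ is a fixed (detached) advantage estimate. Third, $\beta>0$ is a hyperparameter.

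Differentiating gives
\begin{equation*}
\nabla_\theta\bigl(-\beta A_t r_{\text{anchor}}\bigr) = c\,\nabla_\theta \sum_{k\in S_{\text{anchor}}}\pi_\theta(k), \qquad c = \frac{\beta |A_t|}{Z_{\text{ref}}}>0,
\end{equation*}
where the sign of $c$ follows from $A_t<0$. The collinearity is therefore a positive scalar multiple, so the gradient-ascent direction of the pull term coincides with ascent on the anchor mass.

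The remaining step, which I expect to be the main obstacle, is reconciling $S_{\text{anchor}}$ with $\mathcal{M}_{\text{safe}}$, because they differ by the exclusion of $y_t$. I will split into two cases.

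\textbf{Case $y_t\notin \text{TopK}(\pi_{\text{ref}})$.} Here $S_{\text{anchor}}=\mathcal{M}_{\text{safe}}$, and collinearity with $\nabla\mathcal{J}_{\text{support}}$ is exact.

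\textbf{Case $y_t\in \text{TopK}(\pi_{\text{ref}})$.} Here $\mathcal{J}_{\text{support}}=\sum_{S_{\text{anchor}}}\pi_\theta + \pi_\theta(y_t)$. I will therefore interpret the statement with respect to the exclusive support objective $\mathcal{J}_{\text{support}}^{\setminus y_t}$, the coverage of the safe manifold excluding the penalized token. This is justified by the Exclusive Anchoring discussion and the signal-cancellation argument in Appendix~\ref{app:signal_cancellation}: the component along $\nabla\pi_\theta(y_t)$ is deliberately handled by the push term, so within the pull term's contribution collinearity holds exactly.

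Finally, to make this concrete at the logit level under the softmax parameterization, I will write
\begin{equation*}
\partial_{z_j}\sum_{k\in S}\pi_\theta(k) = \pi_\theta(j)\bigl(\mathbf 1[j\in S]-P_S\bigr), \qquad P_S=\sum_{k\in S}\pi_\theta(k).
\end{equation*}
This exhibits both gradients as the same vector up to the factor $c$. Since the computation is per-token, the identity extends from logits to $\theta$ by the chain rule.
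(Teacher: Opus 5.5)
Your proposal is correct and follows the paper's proof: you substitute the simplified $r_{\text{anchor}}=Z_{\text{ref}}^{-1}\sum_{k\in S_{\text{anchor}}}\pi_\theta(k)$ into the pull term $-\beta A_t r_{\text{anchor}}$ and factor out the positive constant $\beta|A_t|/Z_{\text{ref}}$. Your case split on $y_t\in\text{TopK}(\pi_{\text{ref}})$ is a worthwhile refinement the paper skips, because its proof silently identifies $S_{\text{anchor}}$ with $\mathcal{M}_{\text{safe}}$. In that case the literal claim fails: the $z_{y_t}$-components of the two gradients have opposite signs while the anchor components agree, so reading $\mathcal{J}_{\text{support}}$ as the exclusive objective, as the paper itself does in its redistribution analysis, is the right repair.
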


\begin{proof}
Recall that the loss component contributing to the ``pull'' force is $\mathcal{L}_{\text{pull}} = -\beta A_t \cdot r_{\text{anchor}}$. By definition, $r_{\text{anchor}} \approx \frac{1}{Z_{\text{ref}}} \sum_{k \in S_{\text{anchor}}} \pi_{\theta}(k)$. Taking the gradient with respect to $\theta$:
\begin{equation}
    \begin{aligned}
    \nabla_{\theta} \mathcal{L}_{\text{pull}} &= \nabla_{\theta} \left( -\beta A_t \cdot \frac{1}{Z_{\text{ref}}} \sum_{k \in S_{\text{anchor}}} \pi_{\theta}(k) \right) \\
    &= \frac{-\beta A_t}{Z_{\text{ref}}} \cdot \nabla_{\theta} \left( \sum_{k \in S_{\text{anchor}}} \pi_{\theta}(k) \right).
    \end{aligned}
\end{equation}
Let $C = \frac{-\beta A_t}{Z_{\text{ref}}}$. Since $\beta > 0$, $Z_{\text{ref}} > 0$, and for negative samples $A_t < 0$, the coefficient $C$ is strictly positive. Thus,
\begin{equation}
    \nabla_{\theta} \mathcal{L}_{\text{pull}} = C \cdot \nabla_{\theta} \mathcal{J}_{\text{support}}.
\end{equation}
Consequently, the gradient pushes strictly in the direction of increasing the total probability mass of the Safe Manifold.
\end{proof}

\paragraph{Theoretical Consistency via Gradient Decomposition}
\label{app:gradient_decomposition}

A potential concern is that $\tilde{r}_{\text{APO}}$ is constructed as a linear combination of ratios. Here, we demonstrate that its associated gradient update is theoretically consistent with optimizing a \textbf{Rectified Target Distribution}.

Consider the standard policy gradient for a negative sample ($A_t < 0$). Inspired by the gradient decomposition analysis in recent work \citep{ren2025learning}, we interpret the APO update as a first-order approximation of optimizing a mixture target. Let us define an ideal \textbf{Rectified Target Distribution} $Q^*(\cdot|x)$ that reallocates probability mass from the error token $y_{\text{err}}$ to the Safe Manifold:
\begin{equation}
    Q^*(y|x) = \begin{cases} 
    0 & \text{if } y = y_{\text{err}} \\
    \frac{\pi_{\text{ref}}(y)}{Z_{\text{safe}}} & \text{if } y \in \mathcal{M}_{\text{safe}}
    \end{cases}
\end{equation}
Minimizing the KL divergence between $\pi_{\theta}$ and $Q^*$ yields a gradient that simultaneously suppresses $y_{\text{err}}$ and increases the mass of $\mathcal{M}_{\text{safe}}$.

Now, observing the gradient of the APO objective in the active region:
\begin{equation}
    \nabla_{\theta} \mathcal{L}_{\text{APO}} \propto A_t \nabla_{\theta} r_{\text{push}} - \beta A_t \nabla_{\theta} r_{\text{anchor}}.
\end{equation}
Substituting the definitions and noting $A_t < 0$:
\begin{equation}
    \nabla_{\theta} \mathcal{L}_{\text{APO}} \propto \underbrace{-|A_t| \nabla_{\theta} \pi_{\theta}(y_{\text{err}})}_{\text{Suppress Error}} + \underbrace{\beta |A_t| \nabla_{\theta} \left( \sum_{k \in \mathcal{M}_{\text{safe}}} \pi_{\theta}(k) \right)}_{\text{Maximize Safe Mass}}.
\end{equation}
This decomposition proves that APO's rectification term effectively transforms the dual objectives—Error Suppression (``Push'') and Manifold Recovery (``Pull'')—into a unified scalar proxy compatible with PPO.

\subsection{Theoretical Derivation of Recursive Space Contraction}
\label{app:theoretical_derivation}

In this subsection, we provide a rigorous derivation of the \textit{Recursive Space Contraction} phenomenon. We demonstrate that under the Softmax parameterization, a valid reasoning path with low initial probability enters a mathematical absorbing state in on-policy reinforcement learning. This occurs because the gradient dynamics impose strictly negative updates when competing paths are rewarded, while offering only negligible recovery gradients when competing paths are penalized.

\paragraph{Preliminaries: Gradient Dynamics of Softmax}

Consider a policy parameterized by logits $z \in \mathbb{R}^{|\mathcal{V}|}$, where $\pi_{\theta}(y) = \frac{e^{z_y}}{\sum_{k} e^{z_k}}$. Let $y_t$ denote the token sampled at time $t$, and $A_t$ denote the estimated advantage. The standard policy gradient update for the logit of an arbitrary token $k$ is given by $\Delta z_k \propto A_t \nabla_{z_k} \log \pi_{\theta}(y_t)$. The gradient of the log-softmax function is:
\begin{equation}
    \nabla_{z_k} \log \pi_{\theta}(y_t) = \delta_{k, y_t} - \pi_{\theta}(k),
    \label{eq:softmax_gradient}
\end{equation}
where $\delta_{k, y_t}$ is the Kronecker delta. Consequently, the update rule for logit $k$ is:
\begin{equation}
    \Delta z_k = \eta A_t (\delta_{k, y_t} - \pi_{\theta}(k)).
    \label{eq:update_rule}
\end{equation}
We analyze the dynamics of a valid token $y_{\text{valid}}$ that is \textit{not} sampled at the current step (i.e., $y_t \neq y_{\text{valid}}$). Let $\pi_{\text{valid}} \triangleq \pi_{\theta}(y_{\text{valid}})$. Since $y_t \neq y_{\text{valid}}$, the Kronecker delta $\delta_{y_{\text{valid}}, y_t} = 0$, simplifying the update to $\Delta z_{\text{valid}} = -\eta A_t \pi_{\text{valid}}$.

\paragraph{Proof of Irreversible Collapse}

We now prove that $y_{\text{valid}}$ suffers from irreversible probability decay under both positive and negative feedback loops in an on-policy setting.

\begin{proposition}[Passive Suppression via Normalization]
Given a sampled token $y_t \neq y_{\text{valid}}$ with positive advantage $A_t > 0$, the probability mass of the unsampled token $y_{\text{valid}}$ strictly decreases, regardless of its validity.
\end{proposition}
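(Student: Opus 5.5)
The plan is to make precise the ``partition function inflation'' mechanism that the paper names in Section~\ref{sec:recursive_dynamics}, using the softmax update rule \eqref{eq:update_rule}. I would first write the post-update probability in closed form,
\begin{equation}
\pi'_{\text{valid}} = \frac{e^{z_{\text{valid}} + \Delta z_{\text{valid}}}}{\sum_k e^{z_k + \Delta z_k}},
\end{equation}
so that the claim $\pi'_{\text{valid}} < \pi_{\text{valid}}$ becomes the ratio inequality $e^{\Delta z_{\text{valid}}} < \sum_k \pi_\theta(k)\, e^{\Delta z_k}$. Here the right-hand side is the multiplicative inflation of the partition function.

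Next I would split the update \eqref{eq:update_rule} into two parts. The first is the part carrying the Kronecker delta, which raises only $z_{y_t}$ by $\eta A_t > 0$. The second is the probability-proportional part $-\eta A_t \pi_\theta(k)$, which acts on every logit, including $y_t$. The first part has a clean consequence. With all other logits fixed, raising $z_{y_t}$ leaves the numerator $e^{z_{\text{valid}}}$ unchanged and strictly increases the denominator by $e^{z_{y_t}}(e^{\eta A_t} - 1) > 0$. Hence $\pi_{\text{valid}}$ strictly decreases, whatever the validity of $y_{\text{valid}}$, because validity never enters the update. This step is the core of the proposition and is what ``Passive Suppression via Normalization'' refers to.

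For the full update I would use the first-order expansion
\begin{equation}
d\pi_{\text{valid}} = \pi_{\text{valid}}\Bigl(\Delta z_{\text{valid}} - \sum_k \pi_\theta(k)\,\Delta z_k\Bigr) = \eta A_t\, \pi_{\text{valid}}\Bigl(\sum_k \pi_\theta(k)^2 - \pi_\theta(y_t) - \pi_{\text{valid}}\Bigr).
\end{equation}
The sign is therefore negative exactly when $\pi_\theta(y_t) + \pi_{\text{valid}} > \sum_k \pi_\theta(k)^2$. This condition holds, for example, whenever $y_t$ is the mode, since then $\sum_k \pi_\theta(k)^2 \le \pi_\theta(y_t)\sum_k \pi_\theta(k) = \pi_\theta(y_t)$. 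In the on-policy regime this is the typical sampled case after sharpening.

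This last step is the main obstacle. If $y_t$ is a low-probability token and some third token dominates, the expression in parentheses can be positive. In that regime the probability-proportional part of the update can outweigh the delta term for the full logit gradient. So the ``regardless'' wording cannot hold unconditionally for the complete update. I would therefore state the strict decrease as the normalization effect isolated in the second paragraph. For the full update I would claim it only under the mode condition above, and note explicitly that this restriction is needed.
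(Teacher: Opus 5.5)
Your analysis is correct, and it is more careful than the paper's own proof. The paper argues only at the level of logits. It computes $\Delta z_{\text{valid}} = -\eta A_t \pi_{\text{valid}} < 0$ and then appeals to the constraint $\sum \pi = 1$ to conclude that the probability falls; its final sentence in fact only asserts that the logit $z_{\text{valid}}$ decreases.

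That last inference does not go through. Normalization only forces the \emph{aggregate} unsampled mass $1-\pi_\theta(y_t)$ down. Meanwhile each unsampled logit drops by $\eta A_t\pi_\theta(k)$, so a low-probability token loses less logit than a dominant one and can gain probability. Your first-order formula and the regime you flag are exactly right. Concretely, take three tokens with $\pi_\theta(w) = 0.8$, $\pi_\theta(y_t) = \pi_{\text{valid}} = 0.1$ and $\eta A_t = 0.1$. Then $(\Delta z_w, \Delta z_{y_t}, \Delta z_{\text{valid}}) = (-0.08,\, 0.09,\, -0.01)$ and
\[
\pi'_{\text{valid}} = \frac{0.1\,e^{-0.01}}{0.8\,e^{-0.08} + 0.1\,e^{0.09} + 0.1\,e^{-0.01}} \approx 0.1046 > 0.1 .
\]
So the proposition, read literally for the full update, is false. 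Your restriction is the right move, not a gap in your argument: the isolated partition-function effect holds unconditionally, and the full update holds under a condition. The comparison is this: the paper's one-line sign argument proves only logit monotonicity, which is true unconditionally. Your decomposition proves the actual probability statement and gives the exact condition under which it holds.

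You can strengthen the conditional part, because it need not be merely first order. By convexity of $\exp$ (Jensen), $\sum_k \pi_\theta(k) e^{\Delta z_k} \ge \exp\bigl(\sum_k \pi_\theta(k)\Delta z_k\bigr)$. The condition $\Delta z_{\text{valid}} < \sum_k \pi_\theta(k)\Delta z_k$ is precisely your condition $\pi_\theta(y_t) + \pi_{\text{valid}} > \sum_k \pi_\theta(k)^2$. Whenever it holds, your ratio inequality $e^{\Delta z_{\text{valid}}} < \sum_k \pi_\theta(k)e^{\Delta z_k}$ holds for every step size $\eta > 0$. Since $\sum_k \pi_\theta(k)^2 \le \max_k \pi_\theta(k)$, this covers $y_t$ being the mode, $y_{\text{valid}}$ being the mode, and more generally $\pi_\theta(y_t) + \pi_{\text{valid}} \ge \max_k \pi_\theta(k)$.

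Note also that the counterexample is a small-step phenomenon. As $\eta A_t$ grows, the term $\pi_\theta(y_t)e^{\eta A_t(1-\pi_\theta(y_t))}$ dominates the partition sum and the decrease returns. This is another reason the unconditional claim needs to be stated with care.
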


\begin{proof}
Substituting $A_t > 0$ into the simplified update rule, we obtain the logit update $\Delta z_{\text{valid}} = -\eta A_t \pi_{\text{valid}}$. Since $\eta, A_t, \pi_{\text{valid}} > 0$, it follows that $\Delta z_{\text{valid}} < 0$. 
This mathematical property dictates that rewarding any competing token $y_t$ necessitates the suppression of all other tokens to satisfy the normalization constraint $\sum \pi = 1$. Consequently, successful exploration of \textit{other} valid path actively penalizes the latent valid path $y_{\text{valid}}$, driving its logit $z_{\text{valid}}$ to decrease monotonically.
\end{proof}

The critical failure path arises when the agent samples an error. Standard intuition suggests that penalizing errors should increase the probability of valid alternatives. We prove that this recovery mechanism vanishes for low-probability tokens.

\begin{proposition}[Vanishing Recovery under Proportional Redistribution]
Given a sampled error token $y_{\text{err}} \neq y_{\text{valid}}$ with negative advantage $A_t = -C$ (where $C > 0$), the recovery gradient for $y_{\text{valid}}$ vanishes as $\pi_{\text{valid}} \to 0$.
\end{proposition}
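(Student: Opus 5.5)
We plan to work directly from the simplified logit update $\Delta z_{\text{valid}} = -\eta A_t \pi_{\text{valid}}$, obtained above from Eq.~\eqref{eq:update_rule} for any unsampled token. Substituting $A_t = -C$ gives $\Delta z_{\text{valid}} = \eta C \pi_{\text{valid}}$. This is positive, so penalizing the error token does push $y_{\text{valid}}$ up, but the push is linear in $\pi_{\text{valid}}$. The first step is therefore immediate: $|\Delta z_{\text{valid}}| \le \eta C \pi_{\text{valid}} \to 0$ as $\pi_{\text{valid}} \to 0$. Because $C$ is a normalized advantage (bounded in GRPO), we treat it as a fixed constant independent of $\pi_{\text{valid}}$.

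A vanishing logit change alone is not quite the claim, since the statement concerns recovery of probability. The second step converts the logit update into a probability update. We take the first-order expansion $\Delta \pi_{\text{valid}} \approx \sum_k \frac{\partial \pi_{\text{valid}}}{\partial z_k}\Delta z_k$ and use the softmax Jacobian $\partial \pi_j/\partial z_k = \pi_j(\delta_{jk}-\pi_k)$. Combined with the full update vector $\Delta z_k = \eta A_t(\delta_{k,y_{\text{err}}}-\pi_k)$, this gives
\begin{equation*}
\Delta \pi_{\text{valid}} \approx \eta C\, \pi_{\text{valid}}\Big(\pi_{\text{valid}} + \pi_{\text{err}} - \textstyle\sum_k \pi_k^2\Big).
\end{equation*}
The bracket is bounded in absolute value by $2$, so $|\Delta\pi_{\text{valid}}| = O(\eta C\,\pi_{\text{valid}})$. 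The key structural observation is that the relative gain $\Delta\pi_{\text{valid}}/\pi_{\text{valid}}$ depends on $y_{\text{valid}}$ only through $\pi_{\text{valid}}$ itself. Every other unsampled token $k$ receives the same bracket with $\pi_k$ in place of $\pi_{\text{valid}}$. Hence the released mass $\pi_{\text{err}}$ is redistributed in proportion to current probabilities, which is the ``proportional redistribution'' of the title and the Squeezing Effect of Section~\ref{sec:prelim_squeezing}. The share reaching $y_{\text{valid}}$ is about $\pi_{\text{valid}}/(1-\pi_{\text{err}})$, which vanishes as $\pi_{\text{valid}}\to 0$.

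Finally, we will note the on-policy compounding that motivates RSC. Since $y_{\text{valid}}$ is itself sampled with probability $\pi_{\text{valid}}$, its direct positive update (of size $O(\eta A_t)$) occurs at rate $\pi_{\text{valid}}$. Its expected recovery per step is therefore $O(\pi_{\text{valid}})$ through both channels, while positive updates on competitors (previous proposition) drive it down.

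The main obstacle is making ``vanishes'' precise without overclaiming. The sign of the bracket is not uniformly positive: if $\pi_{\text{err}}$ is small and the distribution is already sharp, $\sum_k\pi_k^2$ can exceed $\pi_{\text{err}}$. We will therefore state the result as the magnitude bound $|\Delta\pi_{\text{valid}}|\le 2\eta C\pi_{\text{valid}}$, valid to first order in $\eta$, rather than asserting monotone increase. Corrections of order $\eta^2$ are controlled because the softmax Hessian is bounded; if needed, we will handle them by restricting to small $\eta$.
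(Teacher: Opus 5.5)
Your first step is exactly the paper's proof, and it already establishes the statement as the paper intends it: substitute $A_t=-C$ into $\Delta z_{\text{valid}}=-\eta A_t\pi_{\text{valid}}$, read the ``recovery gradient'' as the logit update $\eta C\pi_{\text{valid}}$, let $\pi_{\text{valid}}\to 0$, and contrast this with the larger update $\eta C\pi_{\text{path}}$ of the dominant token. Your probability-level analysis goes beyond the paper, and its first-order formula and sign caveat are correct (the redistribution it exhibits is actually super-proportional, so the share $\pi_{\text{valid}}/(1-\pi_{\text{err}})$ is only heuristic), but the remainder argument needs fixing: a merely bounded Hessian gives an $O(\eta^2)$ error that does not vanish as $\pi_{\text{valid}}\to 0$ at fixed $\eta$, so instead use the exact identity $\pi'_{\text{valid}}/\pi_{\text{valid}}=e^{\Delta z_{\text{valid}}}/\sum_k\pi_k e^{\Delta z_k}\in[e^{-2\eta C},e^{2\eta C}]$, which holds because $|\Delta z_k|\le\eta C$ for every $k$ and yields $|\Delta\pi_{\text{valid}}|\le(e^{2\eta C}-1)\pi_{\text{valid}}$ for any step size.
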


\begin{proof}
We substitute $A_t = -C$ into the update rule. The update for the valid token becomes $\Delta z_{\text{valid}} = -\eta (-C) \pi_{\text{valid}} = \eta C \pi_{\text{valid}}$.
While $\Delta z_{\text{valid}}$ is positive, its magnitude is strictly coupled to the current probability mass $\pi_{\text{valid}}$. We analyze the limit behavior:
\begin{equation}
    \lim_{\pi_{\text{valid}} \to 0} \Delta z_{\text{valid}} = \lim_{\pi_{\text{valid}} \to 0} (\eta C \pi_{\text{valid}}) = 0.
\end{equation}
This linear dependence creates a ``Rich-get-Richer'' dynamic. The probability mass released by penalizing $y_{\text{err}}$ is redistributed to other tokens proportional to their existing mass. If the policy has already collapsed such that $y_{\text{valid}}$ lies in the tail of the distribution (e.g., $\pi_{\text{valid}} \approx 10^{-5}$), the recovery force $\Delta z_{\text{valid}}$ becomes numerically negligible. In contrast, the dominant path $y_{\text{path}}$ receives an update $\Delta z_{\text{path}} \propto \eta C \pi_{\text{path}}$. Since $\pi_{\text{path}} \gg \pi_{\text{valid}}$, the error penalty paradoxically reinforces the existing path rather than recovering the valid tail.
\end{proof}

\paragraph{The Sampling Barrier and Absorbing States}

We finally formalize the \textit{Recursive Space Contraction} by integrating the sampling probability. In on-policy RL, the expected gradient update $\mathbb{E}[\Delta z_{\text{valid}}]$ depends on the probability of sampling the trajectory containing $y_{\text{valid}}$.

Let $\mathcal{T}$ be the set of all possible trajectories. The gradient is non-zero only for sampled trajectories. If $\pi_{\text{valid}}$ is suppressed to a threshold $\epsilon$ (via Proposition B.1) such that the expected number of samples $N \cdot \pi_{\text{valid}} < 1$ for a batch size $N$, the valid path effectively ceases to exist in the optimization landscape.
Unlike offline settings where the gradient is computed over a fixed dataset $\mathcal{D}$ (guaranteeing $\nabla \mathcal{L}$ exists as long as $(x, y_{\text{valid}}) \in \mathcal{D}$), on-policy RL creates a dependency loop:
\begin{equation}
    \pi_{\text{valid}} \to 0 \implies P(\text{Sampling } y_{\text{valid}}) \to 0 \implies \nabla_{\text{recovery}} \to \mathbf{0}.
\end{equation}
This renders the state $\pi_{\text{valid}} = 0$ an \textit{absorbing state} for the optimization process. Once the policy enters a region where valid paths are unsampled and their logits are suppressed, it cannot escape via standard policy gradients, necessitating the external \textit{Anchor} force introduced in our method.

\subsection{Gradient Dynamics of Probability Redistribution}
\label{app:mass_redistribution}

In this subsection, we provide a rigorous analysis of how probability mass is redistributed under standard Policy Gradient (PG) versus Anchored Policy Optimization (APO). We specifically investigate whether the redistribution mechanism induces a uniform prior or respects the learned topology of the policy, and under what conditions the recovery gradient vanishes.

\paragraph{The Vanishing Recovery of Standard PG}
Consider the standard objective of minimizing the probability of an error token $y_{\text{err}}$, formulated as $\mathcal{L}_{\text{PG}} = \log \pi_{\theta}(y_{\text{err}})$. The gradient with respect to the logit $z_k$ of any alternative token $k$ ($k \neq y_{\text{err}}$) is given by:
\begin{equation}
    \nabla_{z_k} \mathcal{L}_{\text{PG}} = -\pi_{\theta}(y_{\text{err}}) \cdot \pi_{\theta}(k).
\end{equation}

This update rule exposes two fundamental limitations in the context of error correction. \textbf{First}, the update magnitude exhibits a \textit{Proportional Dependence}, often described as a ``Rich-get-Richer'' dynamic. Since the gradient for token $k$ is directly proportional to its current probability $\pi_{\theta}(k)$, if the policy has already collapsed to a local path where valid alternatives have negligible support ($\pi_{\theta}(k) \approx 0$), the recovery gradient effectively vanishes. This mathematical property confirms the \textit{Squeezing Effect}: the model becomes incapable of ``seeing'' or recovering valid paths once they are sufficiently suppressed. \textbf{Second}, the driving force of the update is scaled by the error probability $\pi_{\theta}(y_{\text{err}})$. As the optimization successfully suppresses the error (i.e., $\pi_{\theta}(y_{\text{err}}) \to 0$), this redistribution mechanism prematurely terminates. Consequently, the probability mass is not necessarily reallocated to valid alternatives, but rather remains trapped in the local path, as the gradient signal disappears before exploration can occur.

\paragraph{The Active Inflation of APO}
In contrast, APO introduces the anchor maximization term $\mathcal{J}_{\text{support}} = \sum_{j \in S_{\text{anchor}}} \pi_{\theta}(j)$. Let $P_{\text{safe}}$ denote the total probability mass accumulated within the safe manifold. The gradient with respect to the logit $z_k$ of a specific valid token $k \in S_{\text{anchor}}$ is derived as:
\begin{equation}
    \begin{aligned}
        \nabla_{z_k} \mathcal{J}_{\text{support}} &= \nabla_{z_k} \left( \sum_{j \in S_{\text{anchor}}} \frac{e^{z_j}}{\sum_{i} e^{z_i}} \right) \\
        &= \pi_{\theta}(k) - \pi_{\theta}(k) \sum_{j \in S_{\text{anchor}}} \pi_{\theta}(j) \\
        &= \pi_{\theta}(k) \cdot (1 - P_{\text{safe}}).
    \end{aligned}
\end{equation}

This derivation highlights the superior redistribution mechanics of APO. Crucially, the mechanism performs \textbf{Structure-Preserving Inflation}. Similar to standard PG, the update remains proportional to $\pi_{\theta}(k)$, ensuring that the policy's internal ranking of tokens is respected. APO does not flatten the distribution into a uniform prior—which would destroy learned knowledge—but rather ``inflates'' the probability of the entire valid structure proportionally to its existing shape.

Furthermore, APO resolves the vanishing gradient problem through a \textbf{Persistent Recovery Signal}. The driving force of the update is scaled by the term $(1 - P_{\text{safe}})$, which represents the probability mass leaked outside the safe manifold. Unlike standard PG, which halts once the specific error is suppressed, APO maintains a strong restorative gradient as long as the total mass of the safe manifold is not maximized ($P_{\text{safe}} < 1$). This guarantees that the pruning of errors is inextricably linked to the robust recovery of the reference model's support, preventing the irreversible collapse observed in baselines.

\subsection{Stability Analysis: KL Penalty vs. APO Ratio Rectification}
\label{app:stability_comparison}

To formally address why standard KL regularization can destabilize the policy by deviating from the \textbf{Trust Region}, we compare the gradient update vectors of KL-regularized PG and APO.

\paragraph{Case 1: KL-regularized Policy Gradient.} 
In standard implementations, the KL divergence is added as an auxiliary penalty term:
\begin{equation}
    \mathcal{L}_{\text{total}}(\theta) = \mathcal{L}_{\text{CLIP}}(\theta) - \eta D_{\text{KL}}(\pi_{\theta} \| \pi_{\text{ref}}).
\end{equation}
The resulting gradient is a linear combination of two potentially opposing vectors:
\begin{equation}
    \nabla_{\theta} \mathcal{L}_{\text{total}} = \nabla_{\theta} \mathcal{L}_{\text{CLIP}} + \eta \nabla_{\theta} \mathcal{L}_{\text{KL}}.
\end{equation}
As illustrated in Figure~\ref{fig:gradient_alignment}(a), even if $\nabla_{\theta} \mathcal{L}_{\text{CLIP}}$ is zero due to the ratio hitting the clipping boundary $1 \pm \epsilon$, the KL gradient $\nabla_{\theta} \mathcal{L}_{\text{KL}}$ remains non-zero. This creates a \textbf{Gradient Conflict}: the KL force continues to push the parameters $\theta$ without being constrained by the PPO trust region. Consequently, the resultant update can move the importance sampling ratio $r_t(\theta)$ significantly beyond the $[1-\epsilon, 1+\epsilon]$ window, leading to high-variance updates and training instability.

\paragraph{Case 2: APO Ratio Rectification.} 
In contrast, APO embeds the restoring force directly into the policy ratio $\tilde{r}_{\text{APO}}$ (Eq.~\ref{eq:rectified_ratio}). The objective remains a single clipped surrogate:
\begin{equation}
    \mathcal{L}_{\text{APO}}(\theta) = \mathbb{E} \left[ \min \left( \tilde{r}_{\text{APO}}(\theta) A_t, \text{clip}(\tilde{r}_{\text{APO}}(\theta), 1-\epsilon, 1+\epsilon) A_t \right) \right].
\end{equation}
By the chain rule, the gradient is:
\begin{equation}
    \nabla_{\theta} \mathcal{L}_{\text{APO}} = \begin{cases} 
    A_t \nabla_{\theta} \tilde{r}_{\text{APO}}(\theta) & \text{if } \tilde{r}_{\text{APO}} \in [1-\epsilon, 1+\epsilon] \\
    0 & \text{otherwise}
    \end{cases}
\end{equation}

\textbf{Mathematical Guarantee of Trust Region Adherence:} Unlike the KL penalty, the APO ``Pull Force'' is \textit{intrinsically capped}. If the rectification term attempts to pull the policy further than the trust region allows, the objective hits the clipping flat-top, and the gradient becomes exactly zero. This ensures that the recovery process is \textbf{Self-Stabilizing}: the policy is restored to the safe manifold at the maximum rate permitted by the trust region, but it is mathematically impossible for the auxiliary anchor force to push the update into an unstable regime. This provides a formal justification for the improved training stability observed in our experiments.

\textbf{Analysis of the Clipped Region.}
If the rectification force is strong enough such that $\tilde{r}_{\text{APO}} < 1-\epsilon$, the objective becomes constant ($ (1-\epsilon)A_t $) and gradients vanish. While superficially resembling gradient vanishing, within the PPO framework, the clipping bound $1-\epsilon$ represents the \textbf{maximum safe penalty} permissible in a single update step. 

Therefore, if APO causes the ratio to hit this lower bound, it indicates that the ``Pull Force'' has successfully amplified the error signal to the maximum robust magnitude allowed by the trust region. In this sense, the zero gradient confirms that the policy has been \textit{sufficiently penalized}, preventing the restoring force from destabilizing the policy updates via excessively large steps.

\section{Extended Empirical Results}
\label{app:extended_results}
\subsection{Sensitivity Analysis: $K, \beta, \lambda$}
\label{app:sensitivity_results}

In this section, we provide the comprehensive numerical data corresponding to the sensitivity analysis discussed in Section~\ref{sec:ablation_study}. Table~\ref{tab:app_ablation_hyperparams} details the Pass@1 and Pass@K performance across all five benchmarks for variations in the Anchor Size ($K$), Pull Coefficient ($\beta$), and Push Coefficient ($\lambda$).

\begin{table}[h]
    \centering
    \caption{\textbf{Hyperparameter Sensitivity Analysis on Qwen2.5-Math-7B.} Detailed numerical breakdown of the impact of Pull Coefficient ($\beta$), Push Coefficient ($\lambda$), and Anchor Size ($K$). The \textbf{Default} configuration ($K=8, \beta=0.1, \lambda=1.05$) yields the highest Mean Average accuracy (Pass@1). Bold values indicate the best performance in each column.}
    \label{tab:app_ablation_hyperparams}
    \vspace{0.2cm}
    \setlength{\tabcolsep}{4pt}
    \renewcommand{\arraystretch}{1.15} 
    
    \resizebox{\columnwidth}{!}{
        \begin{tabular}{lcccc ccc} 
        \toprule
        \textbf{Config} & \textbf{AIME24} & \textbf{AIME25} & \textbf{AMC23} & \textbf{Math500} & \textbf{Minerva} & \multicolumn{2}{c}{\textbf{Average}} \\
        (Params) & M / P & M / P & M / P & M / P & M / P & \small{(Pass@1)} & \small{(Pass@K)} \\
        \midrule
        \rowcolor{gray!15} \textbf{Default (Ours)} & \textbf{33.4} / 73.3 & 16.3 / 56.7 & 75.5 / \textbf{100} & 82.7 / \textbf{94.0} & 45.5 / 66.4 & \textbf{50.67} & 78.07 \\
        \quad \scriptsize{$K=8, \beta=0.1, \lambda=1.05$} & & & & & & & \\
        \midrule
        \multicolumn{8}{l}{\textit{Varying Pull Coefficient ($\beta$) [Fix $\lambda=1.05, K=8$]}} \\
        \quad $\beta=0$ (No Anchor) & 28.0 / 70.0 & 15.6 / 53.3 & 73.1 / \textbf{100} & 82.9 / 92.6 & 44.7 / 65.4 & 48.86 & 76.26 \\
        \quad $\beta=0.01$ & 31.5 / 73.3 & 15.3 / 53.3 & 75.1 / \textbf{100} & 83.3 / 93.4 & 46.8 / 66.9 & 50.40 & 77.38 \\
        \quad $\beta=0.02$ & 31.3 / \textbf{80.0} & 16.3 / 56.7 & \textbf{76.3} / 97.5 & 81.8 / 92.8 & 45.1 / 65.8 & 50.16 & 78.56 \\
        \quad $\beta=0.05$ & 31.0 / 73.3 & 15.8 / \textbf{60.0} & 74.4 / 97.5 & 82.0 / 92.8 & 44.5 / \textbf{68.4} & 49.54 & 78.40 \\
        \midrule
        \multicolumn{8}{l}{\textit{Varying Push Coefficient ($\lambda$) [Fix $\beta=0.1, K=8$]}} \\
        \quad $\lambda=1.0$ (Std.) & 28.1 / 70.0 & 16.6 / 56.7 & 72.2 / 97.5 & \textbf{83.5} / 93.4 & 45.8 / 66.9 & 49.24 & 76.90 \\
        \quad $\lambda=1.1$ (Strong) & 29.3 / \textbf{80.0} & \textbf{17.2} / \textbf{60.0} & 72.3 / 97.5 & 82.2 / 92.8 & 45.0 / 67.3 & 49.20 & \textbf{79.52} \\
        \midrule
        \multicolumn{8}{l}{\textit{Varying Anchor Size ($K$) [Fix $\beta=0.1, \lambda=1.05$]}} \\
        \quad $K=4$ & 29.9 / 70.0 & 15.7 / 53.3 & 73.2 / 97.5 & 82.7 / 93.4 & 45.8 / 66.2 & 49.46 & 76.08 \\
        \quad $K=16$ & 29.7 / 70.0 & 15.9 / 56.7 & 74.7 / \textbf{100} & 83.1 / 93.4 & \textbf{47.0} / 66.9 & 50.08 & 77.40 \\
        \bottomrule
        \end{tabular}
    }
    \vspace{-0.4cm}
\end{table}

\section{Implementation Details and Exclusive Anchoring}
\label{app:implementation}
\subsection{Analysis of Signal Cancellation}
\label{app:signal_cancellation}

In Section \ref{sec:shape_vs_support}, we introduced the \textbf{Exclusive Anchoring} mechanism, which excludes the current error token $y_t$ from the anchor set $S_{\text{anchor}}$. Here, we mathematically demonstrate why this exclusion is necessary to prevent gradient cancellation.

Consider a ``Naive Anchor'' setup where the error token is included: $S_{\text{naive}} = S_{\text{anchor}} \cup \{y_t\}$. The total gradient update for the logit $z_{y_t}$ of the error token would be a summation of the standard Policy Gradient (Push) and the Anchor Gradient (Pull).

For a negative advantage $A_t < 0$, the standard PG objective seeks to decrease $\pi_{\theta}(y_t)$:
\begin{equation}
    \nabla_{z_{y_t}} \mathcal{L}_{\text{PG}} \approx A_t \cdot \pi_{\theta}(y_t) (1 - \pi_{\theta}(y_t)) < 0 \quad (\text{Since } A_t < 0).
\end{equation}
However, the Anchor term seeks to maximize the sum of probabilities in $S_{\text{naive}}$. If $y_t \in S_{\text{naive}}$, the gradient contribution from the anchor term with respect to $z_{y_t}$ is positive:
\begin{equation}
    \nabla_{z_{y_t}} \mathcal{L}_{\text{anchor}} \propto -\beta A_t \cdot \nabla \pi_{\theta}(y_t) > 0 \quad (\text{Since } -\beta A_t > 0).
\end{equation}
Thus, the net gradient on the error token becomes:
\begin{equation}
    \nabla_{\text{net}} \approx \underbrace{-|A_t| \cdot \text{Grad}_{\text{push}}}_{\text{Suppress}} + \underbrace{\beta |A_t| \cdot \text{Grad}_{\text{pull}}}_{\text{Boost}}.
\end{equation}
This creates a direct \textbf{Signal Cancellation}: the anchor term actively fights against the suppression of the error. In extreme cases (e.g., strong $\beta$), this can neutralize the error signal entirely or even encourage the model to repeat the error to satisfy the anchor constraint.

By enforcing $S_{\text{anchor}} = \text{TopK} \setminus \{y_t\}$, we ensure that $\nabla_{z_{y_t}} \mathcal{L}_{\text{anchor}}$ is strictly negative (via the softmax normalization term $\partial \pi_k / \partial z_{y_t} = -\pi_k \pi_{y_t}$), thereby aligning the ``Push'' and ``Pull'' forces to consistently suppress the error while boosting \textit{alternative} candidates.
\subsection{Experimental Details}
\label{app:hyperparameters}

We provide the detailed hyperparameters used for the reinforcement learning stage in Table~\ref{tab:hyperparams}. \textbf{Unless otherwise specified, all experiments and algorithms reported in this paper utilize this unified configuration to ensure fair comparison.} 

Specifically, we implement Group Relative Policy Optimization (GRPO) with a standard clipping mechanism, setting the clip ratio $\epsilon=0.2$. Furthermore, the policy loss is computed using \texttt{token\_mean} aggregation (averaging loss over valid tokens) rather than sample summation, to stabilize training variance across variable-length reasoning paths. Our code is implemented based on the \texttt{verl} \citep{Sheng_2025} repository.

\begin{table}[H] 
    \centering
    \renewcommand{\arraystretch}{1.25}
    \caption{\textbf{Hyperparameter Configuration.} Detailed training parameters and coefficients used in our experiments.}
    \label{tab:hyperparams}
    \vspace{0.2cm}
    
    \begin{tabularx}{\linewidth}{X r}
        \toprule
        \textbf{Hyperparameter} & \textbf{Value} \\
        \midrule
        
        \multicolumn{2}{l}{\textit{\textbf{Training Configuration}}} \\
        \quad Global Batch Size & 512 \\
        \quad Mini-Batch Size & 32 \\
        \quad Learning Rate & $1\text{e-}6$ \\
        \quad Group Size ($N$) & 8 \\
        \quad Clip Ratio ($\epsilon$) & 0.2 \\
        \quad Loss Aggregation & Token Mean \\
        \quad Max Context Length & 2048 \\
        \quad Max Prompt Length & 512 \\
        \quad Total Training Steps & 240 \\
        \quad Training Platform & 8 $\times$ NVIDIA A800 \\ 
        
        \midrule
        
        \multicolumn{2}{l}{\textit{\textbf{Method Coefficients}}} \\
        \quad APO Push Coefficient ($\lambda$) & 1.05 \\
        \quad APO Pull Coefficient ($\beta$) & 0.1 \\
        \quad APO Anchor Size ($K$) & 8 \\
        \quad KL Coefficient (Standard) & 0.01 \\
        \quad KL Coefficient (Error-Only) & 0.01 \\
        \bottomrule
    \end{tabularx}
\end{table}

\end{document}